\documentclass{article}
\pdfpagewidth=8.5in
\pdfpageheight=11in
\usepackage[preprint]{neurips_2019}

% Use the postscript times font!
\usepackage{times}
\usepackage{soul}
\usepackage{url}
\usepackage[hidelinks]{hyperref}
\usepackage[utf8]{inputenc}
\usepackage[small]{caption}
\usepackage{graphicx}
\usepackage{amsmath}
\usepackage{booktabs}
\usepackage{algorithm}
\usepackage{algorithmic}
\usepackage{dsfont}
\usepackage[none]{hyphenat}
\usepackage{amsfonts}
\usepackage{caption}
\captionsetup[table]{skip=5pt}
\usepackage{amsthm}
\usepackage{subcaption}
\usepackage{float}
\usepackage{thmtools} 
\usepackage{thm-restate}
%\usepackage{natbib}
%\usepackage{subfigure}
%\urlstyle{same}

%my commands

\newtheorem*{corollary*}{Corollary}
\newtheorem{lemma}{Lemma}
\newtheorem*{lemma*}{Lemma}

\newtheorem{defn}{Definition}
\newcommand{\Lip}[3]{\text{Lip}_{#1}^{#2}{#3}}
\newcommand{\norm}[1]{\left\lVert#1\right\rVert}
\newcommand{\fancyS}[0]{\mathcal{S}}
\newcommand{\fancyX}[0]{\mathcal{X}}
\newcommand{\fancyY}[0]{\mathcal{Y}}
\newcommand{\fancyA}[0]{\mathcal{A}}
\newcommand{\fancyF}[0]{\mathcal{F}}
\newcommand{\fancyT}[0]{\mathcal{T}}
\newcommand{\fancyG}[0]{\mathcal{G}}
\newcommand{\MCube}[0]{$\mathrm{M^3}$}

\newcommand{\integer}[0]{\mathbb{Z}}
\newcommand{\ReLU}[0]{\text{ReLU}}
\newcommand{\rademacher}[1]{\textrm{Rad}(#1)}
\newcommand{\aBold}{\textbf{a}}
\newcommand{\identityFunction}{ \mathds{1}}
\newcommand{\That}[0]{\widehat T}
\newcommand{\aVec}[0]{\textbf{a}}
\newcommand{\expected}[2]{\mathbb{E}_{#1}\Big[ #2 \Big]}
%my commands
\usepackage{xcolor}

\title{Combating the Compounding-Error Problem \\ with a Multi-step Model}
%

% The \author macro works with any number of authors. There are two commands
% used to separate the names and addresses of multiple authors: \And and \AND.
%
% Using \And between authors leaves it to LaTeX to determine where to break the
% lines. Using \AND forces a line break at that point. So, if LaTeX puts 3 of 4
% authors names on the first line, and the last on the second line, try using
% \AND instead of \And before the third author name.

\author{%
  Kavosh Asadi \\
  Department of Computer Science\\
  Brown University\\
  % examples of more authors
  \And
  Dipendra Misra \\
  Department of Computer Science \\
  Cornell University \\
  % Address \\
  % \texttt{email} \\
   \AND
  Seungchan Kim \\
  Department of Computer Science \\
  Brown University \\
  % \texttt{email} \\
  \And
  Michael L. Littman \\
  Department of Computer Science \\
  Brown University \\
  % \texttt{email} \\
  % \And
  % Coauthor \\
  % Affiliation \\
  % Address \\
  % \texttt{email} \\
}

\begin{document}

\maketitle
\begin{abstract}
    Model-based reinforcement learning is an appealing framework for creating agents that learn, plan, and act in sequential environments. Model-based algorithms typically involve learning a transition model that takes a state and an action and outputs the next state---a one-step model. This model can be composed with itself to enable predicting multiple steps into the future, but one-step prediction errors can get magnified, leading to unacceptable inaccuracy. This compounding-error problem plagues planning and undermines model-based reinforcement learning. In this paper, we address the compounding-error problem by introducing a multi-step model that directly outputs the outcome of executing a sequence of actions. Novel theoretical and empirical results indicate that the multi-step model is more conducive to efficient value-function estimation, and it yields better action selection compared to the one-step model. These results make a strong case for using multi-step models in the context of model-based reinforcement learning.
\end{abstract}
\section{Introduction}
The model-based approach to reinforcement learning (RL) offers a unique framework for addressing three important artificial intelligence (AI) problems: understanding the dynamics of the environment through interaction, using the acquired knowledge for planning, and performing sequential decision making. One promise of model-based RL is to enable sample-efficient learning~\citep{sutton1990integrated,deisenroth_pilco,levine2014learning}. This advantage is well-understood in theory in settings such as value-function estimation \citep{azar2012sample}, exploration in discounted and finite-horizon Markov decision processes (MDPs) \citep{szita2010model,dann2015sample}, and exploration in contextual decision processes \citep{sun2018model}.

Planning has long been a fundamental research area in AI \citep{hart1972correction, russel_norvig}. In the context of model-based RL, \cite{sutton_the_book} define planning as any process that goes from a model to a value function or a policy. They describe two notions of planning, namely decision-time planning and background planning. In decision-time planning, the agent utilizes the model during action selection by performing tree search \citep{kearns_near_optimal,kocsis_ucb,silver_go}. An agent is said to be performing background planning if it utilizes the model to update its value or policy network. Examples of background planning include the Dyna Architecture~\citep{sutton1990integrated,sutton_linear_dyna}, or model-based policy-search~\citep{abbeel_inaccurate,deisenroth_pilco,ensemble_kurutach}. We call an algorithm model-based if it performs either background or decision-time planning.

A key aspect distinguishing model-based RL from traditional planning is that the model is learned from experience. As such, the model may be imperfect due to ineffective generalization~\citep{abbeel_inaccurate,nagabandi2018neural}, inadequate exploration~\citep{r_max,model_based_active_exploration}, overfitting~\citep{asadi_lipschitz}, or irreducible errors in unrealizable settings~\citep{shai_shai,talvitie_hallucination_14}. More generally, a common view across various scientific disciplines is that all models are wrong, though some are still useful~\citep{box_all_models_are_wrong,wit_all_models}. 

Previous work explored learning models that look a single step ahead. To predict $H$ steps ahead, the starting point of a step $h \in [2,H]$ is set to the end point of the previous step $h-1$. Unfortunately, when the model is wrong, this procedure can interfere with successful planning~\citep{talvitie_hallucination_14,venkatraman_multi_step,asadi_lipschitz}. A significant reason for this failure is that the model may produce a ``fake'' input, meaning an input that cannot possibly occur in the domain, which is then fed back to the unprepared one-step model. Notice that two sources of error co-exist: The model is imperfect, and the model gets an inaccurate input in all but the first step. The interplay between the two errors leads to what is referred to as the \emph{compounding-error problem}~\citep{asadi_lipschitz}. To mitigate the problem, \cite{talvitie_hallucination_14} and \cite{venkatraman_multi_step} provided an approach, called hallucination, that prepares the model for the fake inputs generated by itself. In contrast, our approach to the problem is to avoid feeding such fake inputs altogether by using a multi-step model. Though multi-step algorithms are popular in the model-free setting~\citep{sutton_the_book,de_asis_multi,singh1996reinforcement,precup2000eligibility}, extension to the model-based setting remains less explored.

Our main contribution is to propose a Multi-step Model for Model-based RL (or simply \MCube) that directly predicts the outcome of executing a \emph{sequence} of actions. Learning the multi-step model allows us to avoid feeding fake inputs to the model. We further introduce a novel rollout procedure in which the original first state of the rollout will be the starting point across all rollout steps. Our theory shows that, relative to the one-step model, learning the multi-step model is more effective for value estimation. To this end, we study the hardness of learning the multi-step model through the lens of Rademacher complexity \citep{bartlett_rademacher}. Finally, we empirically evaluate the multi-step model and show its advantage relative to the one-step model in the context of background planning and decision-time planning.
\section{Background and Notation}

To formulate the reinforcement-learning problem, we use finite-horizon Markov decsion processes (MDPs) with continuous states and discrete actions. See \cite{puterman_mdps} for a thorough treatment of MDPs, and \cite{sutton_the_book} for an introduction to reinforcement learning.
\subsection{Lipschitz Continuity}

Following previous work \citep{berkenkamp2017safe,asadi_lipschitz,luo2018algorithmic,deep_mdp} we make assumptions on the smoothness of models, characterized below.
\begin{defn}
\label{defn:Lipschitz}
Given metric spaces $(\fancyX,d_1)$ and $(\fancyY,d_2)$, $f\!:\!\fancyX\!\to\!\fancyY$ is \emph{Lipschitz} if ,
\begin{equation*}
	Lip(f):=\sup_{x_1\in \fancyX,x_2\in \fancyX}\frac{d_2\big(f(x_1),f(x_2)\big)}{d_1(x_1,x_2)}
\end{equation*} 
is finite. Similarly, $f:\fancyX \times \fancyA \to \fancyY$ is \emph{uniformly Lipschitz} in $\fancyA$ if the quantity below is finite: \begin{equation*}
	Lip^\fancyA(f):=\sup_{a\in \fancyA}\sup_{x_1\in \fancyS,x_2\in \fancyS}\frac{d_2\big(f(x_1,a),f(x_2,a)\big)}{d_1(x_1,x_2)}\ .
\end{equation*}
\end{defn}
\renewcommand{\arraystretch}{1.25}
\begin{table}[H]
\begin{center}
\begin{tabular}{ |c|c|c| } 
 \hline
  & Definition & Agent's Approximation\\
 \hline
 $\pi(a\mid s)$& probability of taking $a$ in $s$& N/A\\
 $\aVec_h$ & $\langle a_1,...,a_h \rangle$ & N/A\\
 $T_{h}(s,\aVec_h)$& MDP state after taking  $\aVec_h$ in $s$ &$\That_{h}(s,\aVec_h)$\\
 $T_{h}(s,s',\pi)$&$\textrm{Pr}(s_{t+h}=s'\mid s_{t}=s,\pi)$&$\widehat T_{h}(s,s',\pi)$\\
 $(T_{1})^h(s,\aVec_h)$ & $T_1\Big(... T_1\big(T_1(s,a_1)\big),a_2,...,a_h\Big)$&$(\That_{1})^h(s,\aVec_h)$ \\
 \hline
\end{tabular}
 \caption{Notation used in the paper.}
  \label{table:transition_notation}
 \end{center}
\end{table}
\subsection{Rademacher Complexity}
\newcommand{\namecite}[1]{\citeauthor{#1}~[\citeyear{#1}]}

We use Rademacher complexity for sample complexity analysis. We define this measure, but for details see \namecite{bartlett_rademacher} or \namecite{mohri_foundation}. Also, see \cite{nan_planning} and \cite{lehnert2018value} for previous applications of Rademacher in reinforcement learning.
\begin{defn}
Consider $f:\fancyS~\to~[-1,1]$, and a set of such functions $\fancyF$. The Rademacher complexity of this set, $\rademacher{\fancyF}$, is defined as:
$$\rademacher{\fancyF}:=\expected{s_j,\sigma_j}{\sup_{f\in\fancyF}\frac{1}{n}\sum_{j=1}^n \sigma_j f(s_j)}\ ,$$
where $\sigma_j$, referred to as Rademacher random variables, are drawn uniformly at random from $\{\pm 1\}$.\label{defn:rademacher}
\end{defn}
The Rademacher variables could be thought of as independent and identically distributed noise. Under this view, the average $\frac{1}{n}\sum_{j=1}^n \sigma_j f(s_j)$ quantifies the extent to which $f(\cdot)$ matches the noise. We have a high Rademacher complexity for a complicated hypothesis space that can accurately match noise. Conversely, a simple hypothesis space has a low Rademacher complexity.

To apply Rademacher to the model-based setting where the output of the model is a vector, we extend Definition \ref{defn:rademacher} to vector-valued functions that map to $[-1,1]^{d}$. Consider a function $g:=\langle f_1,...,f_d \rangle$ where $\forall i\ f_i \in \fancyF$. Define the set of such functions $\fancyG$. Then:
$$\rademacher{\fancyG}:=\expected{s_j,\sigma_{ji}}{\sup_{g\in\fancyG}\frac{1}{n}\sum_{j=1}^n\sum_{i=1}^d \sigma_{ji} g(s_j)_i}\ ,$$
where $\sigma_{ji}$ are drawn uniformly from $\{\pm 1\}$.

\subsection{Transition-Model Notation}

Our theoretical results focus on MDPs with deterministic transitions. This common assumption~\citep{abbeel_inaccurate,talvitie_2017} simplifies notation and analysis. Note that a deterministic environment, like nearly all Atari games~\citep{bellemare_atari}, can still be quite complex and challenging for model-based reinforcement learning~\citep{kamyar_surprising}. We introduce an extension of our results to a stochastic setting in the Appendix. 

We use an overloaded notation for a transition model (see Table~\ref{table:transition_notation}). Notice that our definitions support stochastic $h$-step transitions since these transitions are policy dependent, and that we allow for stochastic policies.

\section[junk]{$\mathrm{M^3}$ -- A Multi-step Model for Model-based Reinforcement Learning}

$\mathrm{M^3}$ is an extension of the one-step model---rather than only predicting a single step ahead, it learns to predict $h\in\{1,...,H\}$ steps ahead using $H$ different functions:
$$\That_h(s,\aVec_h) \approx T_h(s,\aVec_h)\ ,$$
where $\aBold_h=\langle a_1,a_2,...,a_h\rangle$. Finally, by $\mathrm{M^3}$, we mean the set of these $H$ functions: $$\mathrm{M^3}:=\big\{\That_h \mid h \in \integer:h\in [1,H]\big\}\ .$$
This model is different than the few examples of multi-step models studied in prior work: 
\cite{sutton_mixture} as well as
\cite{van_seijen_deeper}
considered multi-step models that do not take actions as input, but are implicitly conditioned on the current policy. Similarly, option models are multi-step models that are conditioned on one specific policy and a termination condition \citep{precup_multi,sutton_option,silver2012compositional}. Finally, \cite{silver2017predictron} introduced a multi-step model that directly predicts next values, but the model is defined for prediction tasks. 

We now introduce a new rollout procedure using the multi-step model. Note that by an $H$-step rollout we mean sampling the next action using the agent's fixed policy $\pi:\fancyS\mapsto \Pr(\fancyA)$, then computing the next state using the agent's model, and then iterating this procedure for $H-1$ more times. We now show a novel rollout procedure using the multi-step model that obviates the need for the model to get its own output.
\begin{figure}
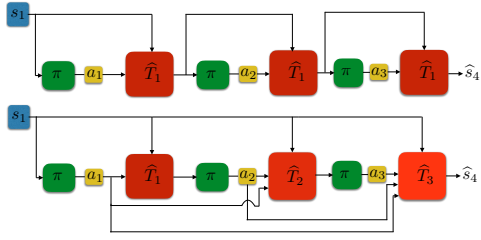

\begin{subfigure}[]{\textwidth}
\centering
   \includegraphics[width=180pt]{figures/rollout_one_step.pdf}
\end{subfigure}

\begin{subfigure}[]{\textwidth}
\centering
   \includegraphics[width=180pt]{figures/rollout_multi_step.pdf}
\end{subfigure}
\caption[Rollouts]{(top) a 3-step rollout using a one-step model. (bottom) a 3-step rollout using a multi-step model \MCube. Crucially, at each step of the multi-step rollout, the agent uses $s_1$ as the starting point. The output of each intermediate step is only used to compute the next action.}
\label{fig:rollout}
\end{figure}
To this end, we derive an approximate experession for:
$$T_{H}(s,s',\pi):=\textrm{Pr}(s_{t+H}=s'\mid s_{t}=s,\pi)\ .$$
Key to our approach is to rewrite $\That_{H}(s,s',\pi)\approx T_{H}(s,s',\pi)$ in terms of predictions conditioned on action sequences as shown below:
\begin{eqnarray*}
\!\That_{H}(s,s',\pi)&:=&\textrm{Pr}(s_{t+H}=s'\mid s_{t}=s,\pi)=\!\sum_{\aVec_H}\textrm{Pr}(\aVec_H \mid s,\pi)\ \identityFunction\big(s'=\underbrace{\That_{H}(s,\aVec_H)}_{\textrm{available by \MCube}}\big) \ .
\end{eqnarray*}
Observe that given the \MCube\ model introduced above, $\That_H(s,\aVec_h)$ is actually available---we only need to focus on the quantity $\textrm{Pr}(\aVec_H|s,\pi)$. Intuitively, we need to compute the probability of taking a sequence of actions of length $H$ in the next $H$ steps starting from $s$. This probability is clearly determined by the states observed in the next $H-1$ steps, and could be written as follows:
\begin{eqnarray*}
\textrm{Pr}(\aVec_H|s,\pi)\!=\!\textrm{Pr}(a_{H}|\aVec_{H-1},s,\pi)\textrm{Pr}(\aVec_{H-1}|s_{t}=s,\pi)\!&=&\!\textrm{Pr}\big(a_{H}|\That_{H-1}(s,\aVec_{H-1}),\pi\big)\textrm{Pr}(\aVec_{H-1}|s,\pi)\\
&=&\pi\big(a_{H}|\underbrace{\That_{H-1}(s,\aVec_{H-1})}_{\textrm{available by \MCube}}\big)\textrm{Pr}(\aVec_{H-1}|s,\pi)\ .\\
\end{eqnarray*}
We can compute $\textrm{Pr}(\aVec_H|s,\pi)$ if we have $\textrm{Pr}(\aVec_{H-1}|s,\pi)$. Continuing for $H-1$ steps:
$$\textrm{Pr}(\aVec_{H}\mid s,\pi)=\pi(a_{1}\mid s)\prod_{h=2}^{H}\pi\big(a_{h}\mid \underbrace{\That_{h-1}(s,\aVec_{h-1}}_{\textrm{available by \MCube}})\big)\ ,$$
which we can compute given the $H-1$ first functions of \MCube, namely $\That_h$ for $h\in [1,H-1]$. 

Finally, to compute a rollout, we  sample from $\That_{H}(s,s',\pi)$ by sampling from the policy at each step:
$$\widehat s_{H+1} = \That_{H}(s,\aVec_H)\quad \textrm{where} \quad a_{h}\sim \pi\big(\cdot\mid\That_{h-1}(s,\aVec_{h-1})\big)\ . $$
Notice that, in the above rollout with \MCube, we have used the first state $s$ as the starting point of every single rollout step. Crucially, we do not feed the intermediate state predictions to the model as input. We hypothesize that this approach can combat the compounding error problem by removing one source of error, namely feeding the model a noisy input, which is otherwise present in the rollout using the one-step model. We illustrate the rollout procedure in Figure~\ref{fig:rollout} for a better juxtaposition of this new rollout procedure and the standard rollout procedure performed using the one-step model. In the rest of the paper, we test this hypothesis in theory and practice.
\section{Value-Function Error Bounds}

We argued above that a multi-step model can better deal with the compounding-error problem. We now formalize this claim in the context of policy evaluation. Specifically, we show a bound on value-function estimation error of a fixed policy in terms of the error in the agent's model, while highlighting similar bounds from prior work~\citep{ross_dagger,talvitie_2017,asadi_lipschitz}. All proofs can be found in the Appendix. Note also that in all expectations below actions and states are distributed according to the agent's fixed policy and its stationary distribution, respectively.
\begin{restatable}{theorem}{oneStepValueError}
	\label{theorem:value_error_with_one_step_model}
	Define the $H$-step value function $V^{\pi}_{H}(s):=\expected{s_i,a_i}{\sum_{i=1}^{H} R(s_i,a_i)}$, then
	$$\Big|\expected{s_1}{V^{\pi}_{H}(s_1)-\widehat V^{\pi}_{H}(s_1)}\Big|
			\leq  \Lip{}{\fancyA}{(R)}\sum_{h=1}^{H-1}(H-h)\expected{s_h,a_h}{ \norm{T_1(s_h,a_h) -\widehat T_1(s_h,a_h)}}\ .
		\label{eq:value_error_with_one_step_model}$$
\end{restatable}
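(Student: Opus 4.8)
The plan is a hybrid, simulation-lemma-style telescoping argument, comparing the true $H$-step return rolled out with $T_1$ against the estimate $\widehat V^\pi_H$ rolled out with the learned one-step model $\widehat T_1$. First I would write both quantities as sums of per-step expected rewards over rollouts that share a common on-policy action sequence: $\expected{s_1}{V^\pi_H(s_1)}$ equals $\sum_{i=1}^{H}\mathbb{E}[R(s_i,a_i)]$ with $s_{i+1}=T_1(s_i,a_i)$, and likewise $\expected{s_1}{\widehat V^\pi_H(s_1)}$ with $\widehat s_1=s_1$ and $\widehat s_{i+1}=\widehat T_1(\widehat s_i,a_i)$. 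Using a shared action sequence is what makes the uniform-in-$\fancyA$ Lipschitz constant of $R$ (Definition~\ref{defn:Lipschitz}) the right tool, since the policy itself is not assumed Lipschitz.

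Next I would interpolate. For $k=0,\dots,H-1$ let $V^{(k)}_H$ be the return of the rollout that uses $T_1$ for the first $H-1-k$ transitions and $\widehat T_1$ for the last $k$ transitions, so $V^{(0)}_H=V^\pi_H$ and $V^{(H-1)}_H=\widehat V^\pi_H$. This particular ordering is the key design choice: consecutive hybrids $V^{(k-1)}_H$ and $V^{(k)}_H$ agree on their first $H-k-1$ transitions, and those shared states are the \emph{true} on-policy states $s_1,\dots,s_{H-k}$, so the single model error by which the two hybrids differ will be charged against exactly the distribution $\expected{s_h,a_h}{\cdot}$ that appears in the bound, not against the drifted distribution of the model's own predictions. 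Telescoping gives $\expected{s_1}{V^\pi_H(s_1)-\widehat V^\pi_H(s_1)}=\sum_{k=1}^{H-1}\expected{s_1}{V^{(k-1)}_H(s_1)-V^{(k)}_H(s_1)}$.

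Then I would bound each term. The rollouts behind $V^{(k-1)}_H$ and $V^{(k)}_H$ first diverge at transition $H-k$, where one produces $T_1(s_{H-k},a_{H-k})$ and the other $\widehat T_1(s_{H-k},a_{H-k})$; afterwards both apply $\widehat T_1$, so the state gap propagates for $k-1$ further steps and contaminates the rewards at the $k$ steps $H-k+1,\dots,H$. A one-line induction on $\widehat T_1$ shows the state gap at the $j$-th contaminated step is at most $\big(\Lip{}{\fancyA}{(\widehat T_1)}\big)^{j-1}\norm{T_1(s_{H-k},a_{H-k})-\widehat T_1(s_{H-k},a_{H-k})}$; bounding each reward difference by $\Lip{}{\fancyA}{(R)}$ times the corresponding state gap (Definition~\ref{defn:Lipschitz}) and summing the resulting geometric series — which collapses to the factor $k$ under the model-smoothness assumption $\Lip{}{\fancyA}{(\widehat T_1)}\le 1$ — yields $\big|\expected{s_1}{V^{(k-1)}_H-V^{(k)}_H}\big|\le k\,\Lip{}{\fancyA}{(R)}\,\expected{s_{H-k},a_{H-k}}{\norm{T_1(s_{H-k},a_{H-k})-\widehat T_1(s_{H-k},a_{H-k})}}$. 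Substituting $h=H-k$, summing over $k$, and passing the absolute value through the telescoping sum by the triangle inequality gives the claimed bound.

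I expect the bookkeeping in the last step to be the main obstacle: getting the interpolation ordering right so the one-step error lands on the true on-policy distribution $\expected{s_h,a_h}{\cdot}$, and tracking how one injected error fans out over the remaining $H-h$ rewards to produce the weight $(H-h)$. The secondary subtlety is the action coupling — because $\pi$ is not Lipschitz, the comparison must be done at matched actions and lean on $\Lip{}{\fancyA}{(R)}$ being \emph{uniform} over $\fancyA$ — and it is exactly the cancellation of all $\Lip{}{\fancyA}{(\widehat T_1)}$ factors (via the smoothness assumption) that leaves a clean one-step-model benchmark against which the multi-step model will later be compared.
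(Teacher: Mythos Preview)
Your hybrid telescoping argument is correct and is essentially the same proof the paper gives: the paper peels one step at a time via the recursion $V^\pi_H(s_1)=\mathbb{E}_{a_1}[R(s_1,a_1)+V^\pi_{H-1}(T_1(s_1,a_1))]$, adds and subtracts $\widehat V^\pi_{H-1}\big(T_1(s_1,a_1)\big)$, and then invokes the Lipschitz value-function lemma $\Lip{}{}{(\widehat V^\pi_{H-1})}\le (H-1)\Lip{}{\fancyA}{(R)}$ to get exactly your weight $(H-h)$ on the step-$h$ model error before recursing on the true next state $s_2$. The only cosmetic difference is packaging: where the paper cites that lemma as a black box, you unpack it inline as the state-gap propagation through $\widehat T_1$ and make the nonexpansiveness assumption $\Lip{}{\fancyA}{(\widehat T_1)}\le 1$ explicit --- that assumption is implicit in the paper's use of the lemma as well.
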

Moving to the multi-step case with \MCube$=\big\{\That_h:h\in [1,H]\big\}$, we have the following result:
\begin{restatable}{theorem}{multiStepValueError}
\label{theorem:value_error_with_multi_step_model}
	\begin{equation*}
				\Big|\expected{s_1}{V^{\pi}_{H}(s_1)-\widehat V^{\pi}_{H}(s_1)}\Big|\leq  \Lip{}{\fancyA}{(R)}\sum_{h=1}^{H-1}\expected{s_1,\aVec_{h}}{ \norm{T_h(s_1,\aVec_h) -\That_h(s_1,\aVec_{h})}}\ .
	\label{eq:value_error_with_multi_step_model}
	\end{equation*}
\end{restatable}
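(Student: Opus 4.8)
The plan is to expand both value functions as sums of per-step expected rewards and compare them term by term. Writing $V^{\pi}_{H}(s_1)=\sum_{i=1}^{H}\expected{s_i,a_i}{R(s_i,a_i)}$ and expanding $\widehat V^{\pi}_{H}(s_1)$ analogously from the \MCube\ rollout procedure introduced above, the state reached after the first $i-1$ actions is $s_i=T_{i-1}(s_1,\aVec_{i-1})$ in the true MDP and $\widehat s_i=\That_{i-1}(s_1,\aVec_{i-1})$ under \MCube, with the convention $T_0(s_1,\cdot)=\That_0(s_1,\cdot)=s_1$ (only the transition model is learned, so the reward $R$ is shared). The structural fact that makes \MCube\ win here is that $\widehat s_i$ is a \emph{direct} function of $s_1$ and the action prefix $\aVec_{i-1}$, with no nested composition of a one-step model, so the error of the $i$-th predicted state is the single quantity $\norm{T_{i-1}(s_1,\aVec_{i-1})-\That_{i-1}(s_1,\aVec_{i-1})}$ rather than an accumulation of one-step errors over $i-1$ steps.

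First I would subtract the two expansions and move the absolute value inside the sum over $i$ and inside the expectation, using $\big|\mathbb{E}[X]\big|\le\mathbb{E}\big[|X|\big]$ together with the triangle inequality. The $i=1$ term vanishes because $\widehat s_1=s_1$ and $a_1\sim\pi(\cdot\mid s_1)$ in both rollouts. For each $i\in\{2,\dots,H\}$, after coupling the action sequences (see below), I would bound the integrand $\big|R(s_i,a_i)-R(\widehat s_i,a_i)\big|$ by $\Lip{}{\fancyA}{(R)}\norm{s_i-\widehat s_i}=\Lip{}{\fancyA}{(R)}\norm{T_{i-1}(s_1,\aVec_{i-1})-\That_{i-1}(s_1,\aVec_{i-1})}$ using uniform Lipschitz continuity of $R$ in $\fancyA$ (Definition~\ref{defn:Lipschitz}). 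Taking expectations, summing over $i$, and reindexing $h=i-1$ then yields exactly $\Lip{}{\fancyA}{(R)}\sum_{h=1}^{H-1}\expected{s_1,\aVec_h}{\norm{T_h(s_1,\aVec_h)-\That_h(s_1,\aVec_{h})}}$; in particular each prediction error enters with coefficient $1$, in contrast to the coefficient $(H-h)$ of Theorem~\ref{theorem:value_error_with_one_step_model}.

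The step I expect to require the most care is the per-step reward comparison when actions are sampled along \emph{different} state trajectories: $a_i\sim\pi(\cdot\mid s_i)$ in the true rollout but $a_i\sim\pi(\cdot\mid\widehat s_i)$ in the \MCube\ rollout. The clean way to handle this is to pass through an intermediate hybrid quantity in which the action sequence is drawn from the true policy-induced distribution while each reward is evaluated at the \MCube\ prediction $\widehat s_i=\That_{i-1}(s_1,\aVec_{i-1})$: the term-by-term Lipschitz bound above applies directly to the gap between $V^{\pi}_{H}$ and this hybrid, and one checks that under the natural coupling of the two rollouts the hybrid coincides with $\widehat V^{\pi}_{H}$. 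The deterministic-transition assumption keeps this bookkeeping light, and the stochastic version is deferred to the Appendix; everything else is routine use of the triangle inequality and linearity of expectation.
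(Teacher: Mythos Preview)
Your overall strategy---expand both value functions as sums of per-step expected rewards, compare term by term, apply the uniform Lipschitz bound on $R$, note that the $i=1$ term cancels, and reindex $h=i-1$---is exactly the route the paper takes.

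The one substantive gap is precisely the step you flag as delicate. Your proposed fix is to insert a hybrid in which the action prefix $\aVec_{i-1}$ is drawn from the \emph{true} rollout distribution while the reward is evaluated at $\That_{i-1}(s_1,\aVec_{i-1})$, and then to assert that ``under the natural coupling of the two rollouts the hybrid coincides with $\widehat V^{\pi}_{H}$.'' That assertion is not true in general: in the \MCube\ rollout each $a_h$ is sampled from $\pi\big(\cdot\mid\That_{h-1}(s_1,\aVec_{h-1})\big)$, whereas in the true rollout $a_h\sim\pi\big(\cdot\mid T_{h-1}(s_1,\aVec_{h-1})\big)$. Unless these two conditional distributions agree at every step, the marginal law of $\aVec_h$ differs between the two rollouts, and no coupling can make the hybrid equal $\widehat V^{\pi}_{H}$; an extra term reflecting the discrepancy in action distributions would remain. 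The paper does not produce such a coupling either---it simply inserts the line ``assuming $\pi\big(\cdot\mid T(s,\aVec_h)\big)=\pi\big(\cdot\mid\That(s,\aVec'_h)\big)$'' and then identifies the two action sequences. With that explicit assumption your per-term Lipschitz bound goes through verbatim and yields the stated inequality; without it, the claimed coincidence of the hybrid with $\widehat V^{\pi}_{H}$ is the missing piece.
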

Note that, by leveraging \MCube\ in Theorem~\ref{theorem:value_error_with_multi_step_model}, we removed the $H-h$ factor from each summand of the bound in Theorem~\ref{theorem:value_error_with_one_step_model}. This result suggests that \MCube\ is more conducive to value-function learning as long as $h$-step generalization error $\expected{s_1,\aVec_{h}}{ \norm{T_h(s_1,\aVec_h) -\That_h(s_1,\aVec_{h})}}$ grows slowly with $h$. In the next section, we show that this property holds under weak assumptions, concluding that the bound from Theorem~\ref{theorem:value_error_with_multi_step_model} is an improvement over the bound from Theorem~\ref{theorem:value_error_with_one_step_model}. 
\section{Analysis of Generalization Error}

We now study the sample complexity of learning $h$-step dynamics. To formulate the problem, consider a dataset $D:\langle s^i,\aVec^i_h:=\langle a^{i}_1,...,a^{i}_h\rangle,T_{h}(s^i,\aVec^i_h)\rangle$. Consider a scalar-valued function $f:[-1,1]^{d}\to[-1,1]$, where $d$ is the dimension of the state space, and a set of such functions $\fancyF$. Consider a function $\That_h=\langle f_1,...,f_d \rangle$ and a set of functions 
$$\fancyT_h:=\big\{\That_h:=\langle f_1,...,f_d \rangle |f_i\in\fancyF\ \forall i \in \{1,...,d\}\big\}.$$
For each $h$ we learn a function $\That_h$ such that: $\min_{\That_h} \frac{1}{|D|}\sum_{i=1}^{|D|}\norm{T_h(s^i,\aVec^i_h)-\That_h(s^i,\aVec^i_h)}_{1}\!.$
\begin{restatable}{lemma}{RademacherLemma}
\label{lemma:generalization_error_bound_using_rademacher}
For any function $\That\in \fancyT_h$, $\forall\delta \in (0,1)$, training error $\Delta$, and probability at least $1-\delta$:
\begin{equation*}
	\underbrace{\expected{s,\aVec_h}{\!\norm{T_h(s,\aVec_h)-\That_h(s,\aVec_h)}_{1}}}_{\textup{generalization error}}\leq2\sqrt{2}\rademacher{\fancyT_h} +d\ln{\sqrt{\frac{1/\delta}{|D|}}}+\Delta\ ,
\end{equation*}
\end{restatable}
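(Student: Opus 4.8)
The plan is to recognize the statement as the textbook Rademacher generalization bound applied to the regression-loss class induced by $\fancyT_h$, combined with a contraction step that trades the complexity of that loss class for $\rademacher{\fancyT_h}$. Concretely, for each $\That_h\in\fancyT_h$ I would define the loss $\ell_{\That_h}(s,\aVec_h):=\norm{T_h(s,\aVec_h)-\That_h(s,\aVec_h)}_{1}$ and let $\mathcal{L}_h:=\{\ell_{\That_h}:\That_h\in\fancyT_h\}$. Since states, and hence the targets $T_h(s,\aVec_h)$, lie in $[-1,1]^{d}$ and each coordinate of $\That_h$ maps into $[-1,1]$, every $\ell_{\That_h}$ is bounded in $[0,2d]$, so a bounded-differences (McDiarmid) argument followed by the usual symmetrization step --- i.e.\ the standard Rademacher generalization theorem of \namecite{bartlett_rademacher} and \namecite{mohri_foundation} --- gives, with probability at least $1-\delta$ and uniformly over $\That_h\in\fancyT_h$,
$$\expected{s,\aVec_h}{\norm{T_h(s,\aVec_h)-\That_h(s,\aVec_h)}_{1}}\le\frac{1}{|D|}\sum_{i=1}^{|D|}\norm{T_h(s^i,\aVec^i_h)-\That_h(s^i,\aVec^i_h)}_{1}+2\,\rademacher{\mathcal{L}_h}+c(\delta,|D|)\ ,$$
where the empirical term equals the training error $\Delta$ and $c(\delta,|D|)$ is the confidence term reported in the statement. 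This already disposes of $\Delta$ and of the confidence term, so all that remains is to bound $\rademacher{\mathcal{L}_h}$ by $\rademacher{\fancyT_h}$.

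For the contraction step I would expand the $\ell_1$ norm coordinatewise: for a sample $(s^j,\aVec^j_h)$ with target $y^j:=T_h(s^j,\aVec^j_h)$, one has $\ell_{\That_h}(s^j,\aVec^j_h)=\sum_{i=1}^{d}|y^j_i-f_i(s^j,\aVec^j_h)|$. Because the $d$ components $f_1,\dots,f_d$ of $\That_h$ range over $\fancyF$ independently, the supremum over $\fancyT_h$ splits into $d$ separate suprema, one per coordinate. Within coordinate $i$ the map $u\mapsto|y^j_i-u|$ is $1$-Lipschitz, so Talagrand's contraction lemma applies; the only care needed is that the offset $y^j_i$ inside the absolute value is sample-dependent, which is handled by subtracting the constant $|y^j_i|$ (it does not depend on $f$ and has zero Rademacher expectation), after which $|y^j_i-f_i(s^j,\cdot)|$ may be replaced by $f_i(s^j,\cdot)$ without increasing complexity. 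Summing the $d$ coordinates back up and comparing with the definition of $\rademacher{\fancyT_h}$ --- which likewise decomposes into $d$ per-coordinate terms, but with an independent sign $\sigma_{ji}$ for each --- yields $\rademacher{\mathcal{L}_h}\le\sqrt{2}\,\rademacher{\fancyT_h}$, the $\sqrt{2}$ arising from reconciling the one-sign-per-sample complexity natural to the loss class with the per-coordinate-sign complexity used in the lemma (equivalently, from a vector-valued contraction inequality).

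Combining the two displays gives the claimed bound for every $\That_h\in\fancyT_h$ simultaneously, hence in particular for the empirical minimizer produced by the learning rule $\min_{\That_h}\frac{1}{|D|}\sum_i\norm{T_h(s^i,\aVec^i_h)-\That_h(s^i,\aVec^i_h)}_{1}$. I expect the contraction/reassembly step to be the main obstacle: one must be careful both about the sample-dependent offset inside the absolute value (so that the $\phi(0)=0$ form of the contraction lemma is not invoked illegitimately) and about the exact bookkeeping between the scalar Rademacher complexity that naturally appears on the loss class and the vector-valued complexity $\rademacher{\fancyT_h}$, so that the constant that emerges is precisely $2\sqrt{2}$ and not something larger; keeping the confidence term in the stated form $d\ln\sqrt{(1/\delta)/|D|}$ is then just a matter of how the bounded-differences step is instantiated.
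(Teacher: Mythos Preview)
Your plan is essentially the paper's proof: write the generalization error as empirical error plus a uniform deviation $\Phi$, bound $\mathbb{E}[\Phi]$ by symmetrization to get twice the Rademacher complexity of the loss class, then contract down to $\rademacher{\fancyT_h}$, and finally concentrate $\Phi$ around its mean via McDiarmid using the $2d/|D|$ bounded-differences constant. The only real divergence is in the contraction step. The paper applies the vector-valued contraction inequality of Maurer (Corollary~4) in one shot to pass from $\mathbb{E}_\sigma\big[\sup_{\That_h}\frac{1}{|D|}\sum_i\sigma_i\norm{T_h-\That_h}_1\big]$ to $\sqrt{2}\,\rademacher{\fancyT_h}$; you instead expand the $\ell_1$ loss coordinatewise, split the supremum using the product structure $\fancyT_h=\fancyF^d$, and run scalar Talagrand on each coordinate.

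Your route is more elementary and in fact slightly sharper than you realize: once the supremum splits as $\sum_{i=1}^d \mathbb{E}_\sigma\big[\sup_{f\in\fancyF}\frac{1}{|D|}\sum_j\sigma_j|y^j_i-f(s^j,\aVec^j_h)|\big]$, each summand is just the scalar Rademacher complexity of a $1$-Lipschitz transform of $\fancyF$, and after contraction equals $\rademacher{\fancyF}$ regardless of whether the signs across coordinates were ``shared'' or independent --- the expectation has already been taken separately. Since the paper itself shows $\rademacher{\fancyT_h}=d\,\rademacher{\fancyF}$ for this product class, your argument gives $\rademacher{\mathcal{L}_h}\le\rademacher{\fancyT_h}$ with no $\sqrt{2}$, so the ``reconciling signs'' step you flag as the main obstacle is not actually needed. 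The paper's $\sqrt{2}$ is the price of invoking Maurer's inequality, which does not assume the coordinates are chosen independently; you have exploited that independence and so can drop it. Either way the stated lemma follows.
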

  The second and third terms of the bound are not functions of $h$. Therefore, to understand how generalization error grows as a function of $h$, we look at the dependence of Rademacher complexity of $\fancyT_h$ to $h$.
\begin{restatable}{lemma}{rademacherNeuralNets}
\label{lemma:rademacher_neural_nets}
	For the hypothesis space  shown in \emph{Figure 7} %\ref{fig:nn_architecure}}
	(see Appendix):
	$$\rademacher{\fancyT_h}\leq\frac{d}{|D|}\Big(\expected{s^i}{  \sqrt{\sum_{i=1}^{|D|}\norm{ s^i}_{2}} }+\expected{\aVec^i_h}{  \sqrt{\sum_{i=1}^{|D|}\norm{ \aVec^i_h}_{2}} }\Big).$$
\end{restatable}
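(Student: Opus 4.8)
The plan is to reduce the vector-valued Rademacher complexity to the scalar case, and then control the scalar complexity by a standard layer-peeling argument for the feed-forward network class of Figure~7. For the first reduction, observe that in the definition of $\rademacher{\fancyT_h}$ each coordinate $f_i$ of $\That_h=\langle f_1,\dots,f_d\rangle$ ranges over the same class $\fancyF$, independently of the other coordinates; hence, writing $x_j=(s^j,\aVec_h^j)$ for the concatenated state–action-sequence input, the supremum over $\fancyT_h$ of $\sum_{i=1}^d\sum_j\sigma_{ji}f_i(x_j)$ factors as $\sum_{i=1}^d\sup_{f\in\fancyF}\sum_j\sigma_{ji}f(x_j)$. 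Taking expectations and using that the $\sigma_{ji}$ are i.i.d.\ across $i$ gives $\rademacher{\fancyT_h}=d\,\rademacher{\fancyF}$.

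Next I would bound $\rademacher{\fancyF}$. Writing the network of Figure~7 as a composition of affine maps and coordinate-wise $1$-Lipschitz \ReLU{} activations, with the weight-norm constraints from that figure (which I expect to be normalized so that the product of the per-layer operator-norm bounds equals $1$, matching the clean form of the claimed bound), I would peel the layers from the top, one at a time: each application of the Ledoux–Talagrand contraction inequality removes one \ReLU{} layer without increasing the complexity, and each affine layer contributes only its operator-norm factor. This reduces the problem to the Rademacher complexity of the top linear functional over the input points, i.e.\ to bounding $\frac{1}{|D|}\,\mathbb{E}_{\sigma}\big\lVert\sum_j\sigma_j x_j\big\rVert_2$, and by Jensen's inequality (equivalently Khintchine–Kahane) this is at most $\frac{1}{|D|}\sqrt{\sum_j\lVert x_j\rVert_2^2}$.

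Finally I would split the input norm into its state and action-sequence blocks: since $x_j=(s^j,\aVec_h^j)$ we have $\sum_j\lVert x_j\rVert_2^2=\sum_j\lVert s^j\rVert_2^2+\sum_j\lVert \aVec_h^j\rVert_2^2$, and subadditivity of the square root gives $\sqrt{\sum_j\lVert x_j\rVert_2^2}\le\sqrt{\sum_j\lVert s^j\rVert_2^2}+\sqrt{\sum_j\lVert \aVec_h^j\rVert_2^2}$. Combining with $\rademacher{\fancyT_h}=d\,\rademacher{\fancyF}$ and taking the expectation over the random dataset yields the stated inequality, with the two blocks entering separately so that the action-sequence term is the only one that can grow with $h$; under the assumed input normalization one further uses $\lVert s^j\rVert_2^2\le\lVert s^j\rVert_2$ (and similarly for $\aVec_h^j$) to arrive at the exact form written. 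The main obstacle I anticipate is the layer-peeling step: making the contraction argument rigorous for the precise architecture and weight constraints of Figure~7, checking that those constraints are exactly what is needed for the per-layer multiplicative factors to collapse to $1$, and tracking in which norm the input points must be measured so that the bound comes out as stated rather than with squared norms.
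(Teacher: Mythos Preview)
Your proposal follows essentially the same route as the paper: the reduction $\rademacher{\fancyT_h}=d\,\rademacher{\fancyF}$, peeling the output layer via the $\ell_1$ bound on $w$, removing the \ReLU{} by Ledoux--Talagrand contraction, and finishing with Cauchy--Schwarz and Jensen. The one place where your write-up diverges from the paper is exactly the obstacle you flagged. Figure~7 carries \emph{separate} norm bounds on the state weights $u_j$ and the action-sequence weights $v_j$ (in addition to $\norm{w}_1\le 1$), so after contraction the paper takes the supremum over $u$ and over $v$ independently, obtaining $\norm{\sum_i\sigma_i s^i}_2+\norm{\sum_i\sigma_i\aVec_h^i}_2$ directly via two applications of Cauchy--Schwarz. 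Your concatenation $x_j=(s^j,\aVec_h^j)$ implicitly assumes a single norm bound on the joint vector $(u,v)$, which defines a strictly smaller class; the intermediate quantity $\norm{\sum_j\sigma_j x_j}_2$ you reach is therefore not an upper bound on the Rademacher average for the paper's hypothesis space, even though after your subadditivity step the final expression happens to coincide with the paper's. The fix is simple and is what the paper does: split into the two blocks immediately after contraction (separate sups over $u$ and $v$), rather than at the end via $\sqrt{A+B}\le\sqrt{A}+\sqrt{B}$.
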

In our experiments, we represent action sequences with $h$-hot vector so $\expected{\aVec^i_h}{  \sqrt{\sum_{i=1}^{n}\norm{ \aVec^i_h}_2} }=h$.
\iffalse
\begin{corollary*}Using Theorem \ref{theorem:generalization_error_bound_using_rademacher} and Lemma \ref{lemma:rademacher_neural_nets}, we get:

\begin{equation*}
	\underbrace{\expected{s,\aVec_h}{\!\norm{T_h(\!s,\!\aVec_h)\!-\!\That_h(s,\aVec_h)}_{1}\!}\!}_{\textup{generalization error}}\leq  \frac{\!2\sqrt{2}d}{|D|}\!\big(\expected{s^i}{  \!\sqrt{\!\sum_{i=1}^{|D|}\norm{ s^i}_{2}} }\!+\!h\big)
	+d\ln{\sqrt{\frac{1/\delta}{|D|}}}+\Delta\ .
\end{equation*}
\end{corollary*}
\fi
We finally get the desired result:
\begin{restatable}{theorem}{finalTheorem}
Define the constant $C_1:=\Lip{}{\fancyA}{(R)}\Big(\frac{\!2\sqrt{2}d}{|D|}\expected{s^i}{  \!\sqrt{\!\sum_{i=1}^{|D|}\norm{ s^i}_{2}} }+d\ln{\sqrt{\frac{1/\delta}{|D|}}}+\Delta\Big)\ $ and the constant $C_2:=\Lip{}{\fancyA}{(R)}\frac{2\sqrt{2}d}{|D|}$. Then:
\begin{itemize}
    \item with the one-step model,$\ \Big|\expected{s_1}{V^{\pi}_{H}(s_1)-\widehat V^{\pi}_{H}(s_1)}\Big|\leq \frac{H(H-1)}{2}C_1+ \frac{H(H-1)}{2}C_2 .$
    \item with the multi-step model,
    $\ \Big|\expected{s_1}{V^{\pi}_{H}(s_1)-\widehat V^{\pi}_{H}(s_1)}\Big|\leq (H-1)C_1+\frac{H(H-1)}{2}C_2 .$
\end{itemize}
\end{restatable}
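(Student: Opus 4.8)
The plan is to chain the value-error bounds of Theorems~\ref{theorem:value_error_with_one_step_model} and \ref{theorem:value_error_with_multi_step_model} with the per-step generalization bound obtained by substituting Lemma~\ref{lemma:rademacher_neural_nets} into Lemma~\ref{lemma:generalization_error_bound_using_rademacher}, and then to evaluate two elementary arithmetic sums. Write $g_h:=\expected{s_1,\aVec_h}{\norm{T_h(s_1,\aVec_h)-\That_h(s_1,\aVec_h)}}$ for the $h$-step generalization error. Plugging the Rademacher estimate of Lemma~\ref{lemma:rademacher_neural_nets} into Lemma~\ref{lemma:generalization_error_bound_using_rademacher}, and using that the $h$-hot encoding gives $\expected{\aVec^i_h}{\sqrt{\sum_{i=1}^{|D|}\norm{\aVec^i_h}_2}}=h$, I obtain, for each fixed $h$ with probability at least $1-\delta$,
\[
\Lip{}{\fancyA}{(R)}\,g_h\ \le\ \Lip{}{\fancyA}{(R)}\Big(\frac{2\sqrt{2}\,d}{|D|}\expected{s^i}{\sqrt{\sum_{i=1}^{|D|}\norm{s^i}_2}}+d\ln\sqrt{\frac{1/\delta}{|D|}}+\Delta\Big)+\Lip{}{\fancyA}{(R)}\frac{2\sqrt{2}\,d}{|D|}\,h\ =\ C_1+C_2\,h .
\]
This is the single inequality that does all the work; in particular its $h=1$ instance (whose action ``sequence'' is a single action) reads $\Lip{}{\fancyA}{(R)}\,g_1\le C_1+C_2$.

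For the one-step model, start from Theorem~\ref{theorem:value_error_with_one_step_model}. The only learned object there is $\That_1$, and since actions and states are drawn from the fixed policy and its stationary distribution at every step, each summand $\expected{s_h,a_h}{\norm{T_1(s_h,a_h)-\widehat T_1(s_h,a_h)}}$ equals the one-step generalization error $g_1$ (the $\ell_1$ bound of Lemma~\ref{lemma:generalization_error_bound_using_rademacher} controls it regardless of which norm Theorem~\ref{theorem:value_error_with_one_step_model} intends). Hence
\[
\Big|\expected{s_1}{V^{\pi}_{H}(s_1)-\widehat V^{\pi}_{H}(s_1)}\Big|\ \le\ \sum_{h=1}^{H-1}(H-h)\,(C_1+C_2)\ =\ \frac{H(H-1)}{2}\,(C_1+C_2),
\]
because $\sum_{h=1}^{H-1}(H-h)=\sum_{k=1}^{H-1}k=\frac{H(H-1)}{2}$; splitting the right-hand side gives the first claimed inequality. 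For the multi-step model, start from Theorem~\ref{theorem:value_error_with_multi_step_model}, whose $h$-th summand is exactly $\Lip{}{\fancyA}{(R)}\,g_h\le C_1+C_2 h$, so
\[
\Big|\expected{s_1}{V^{\pi}_{H}(s_1)-\widehat V^{\pi}_{H}(s_1)}\Big|\ \le\ \sum_{h=1}^{H-1}(C_1+C_2\,h)\ =\ (H-1)\,C_1+C_2\,\frac{H(H-1)}{2},
\]
which is the second claimed inequality.

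The summations are routine; two points need care. First, one must justify that the per-step term in Theorem~\ref{theorem:value_error_with_one_step_model} coincides with the generalization error of Lemma~\ref{lemma:generalization_error_bound_using_rademacher} — this relies on the states at each rollout step being distributed according to the stationary distribution rather than a drifted distribution, and is the main conceptual subtlety. Second, Lemma~\ref{lemma:generalization_error_bound_using_rademacher} is a high-probability statement for one fixed $h$, whereas the multi-step bound quantifies over the $H-1$ separately learned functions $\That_1,\dots,\That_{H-1}$; I would close this gap with a union bound, replacing $\delta$ by $\delta/(H-1)$, which only rescales the $\ln$ term hidden inside $C_1$ and leaves the form of the bound unchanged. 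Everything else is bookkeeping.
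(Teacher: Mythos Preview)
Your proposal is correct and follows essentially the same route as the paper: apply Theorems~\ref{theorem:value_error_with_one_step_model} and \ref{theorem:value_error_with_multi_step_model}, bound each per-step generalization term by combining Lemma~\ref{lemma:generalization_error_bound_using_rademacher} with Lemma~\ref{lemma:rademacher_neural_nets} (yielding $C_1+C_2 h$), and then evaluate $\sum_{h=1}^{H-1}(H-h)$ and $\sum_{h=1}^{H-1}h$. The two caveats you flag---that the one-step summands at different rollout steps must share the distribution over which the Rademacher bound was proved, and that a union bound over the $H-1$ learned functions is needed to make the multi-step high-probability statement uniform---are glossed over in the paper's proof, so your write-up is if anything more careful.
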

Note the reduction of factor $H$ in the coefficient of $C_1$, which is typically larger than $C_2$.
\section{Discussion}
As training the multi-step involves training $H$ different functions, the computational complexity of learning \MCube\ is $H$-times more than the complexity of learning the one-step model. However, the two rollout procedures shown in Figure \ref{fig:rollout} require equal computation. So in cases where planning is the bottleneck, the overall computational complexity of the two cases should be similar. Specifically, in experiments we observed that the overall running time under the multi-step model was always less than double the running time under the one-step model.

Previous work has identified various ways of improving one-step models: quantifying model uncertainty during planning \citep{deisenroth_pilco,gal2016improving}, hallucinating training examples \citep{talvitie_hallucination_14,kamyar_surprising,venkatraman_multi_step,oh2015action,bengio2015scheduled}, using ensembles \citep{ensemble_kurutach,model_based_active_exploration}, or model regularization \citep{nan_planning,asadi_lipschitz}. These methods are independent of the multi-step model idea; future work will explore the benefits of combining these advances with \MCube.

\section{Empirical Results}

The goal of our experiments is to investigate if the multi-step model can perform better than the one-step model in several model-based scenarios. We specifically set up experiments in background planning and decision-time planning to test this hypothesis. Note also that we provide code for all experiments in our supplementary material.

\subsection{Background Planning}
As mentioned before, one use case for models is to enable fast value-function estimation. We compare the utility of the one-step model and the multi-step model for this purpose. For this experiment, we used the all-action variant of actor-critic algorithm, in which the value function $\widehat{Q}$ (or the critic) is used to estimate the policy gradient  $\expected{s}{\sum_{a}\nabla_{w}\pi(a|s;w)\widehat{Q}(s,a;\theta)}$ \citep{sutton_policy_gradient,allen_mac}.
Note that it is standard to learn the value function model-free:
$$\theta\leftarrow\theta +\alpha \big(G_1-\widehat Q(s_t,a_t;\theta)\big)\nabla_{\theta}\widehat Q(s_t,a_t,\theta)\ ,$$
where $G_1:=r_t+ \widehat Q(s_{t+1},a_{t+1},\theta)$. \cite{mnih_asynchronous} generalize this objective to a multi-step target $G_H~:=(\sum_{i=0}^{H-1}r_{t+i}) +\widehat Q(s_{t+h},a_{t+h};\theta)$.
Crucially, in the model-free case, we compute $G_H$ using the single trajectory observed during environmental interaction. However, because the policy is stochastic, $G_H$ is a random variable with some variance. To reduce variance, we can use a learned model to generate an arbitrary number of rollouts (5 in our experiments), compute $G_H$ for each rollout, and average them. We compared the effectiveness of both one-step and the multi-step models in generating useful rollouts for learning. To ensure meaningful comparison, for each algorithm, we perform the same number of value-function updates. We used three standard RL domains from Open AI Gym \citep{brockman2016openai}, namely Cart Pole, Acrobot, and Lunar Lander. Results are summarized in Figures \ref{fig:control1} and \ref{fig:control2}.
\begin{figure*}[ht]
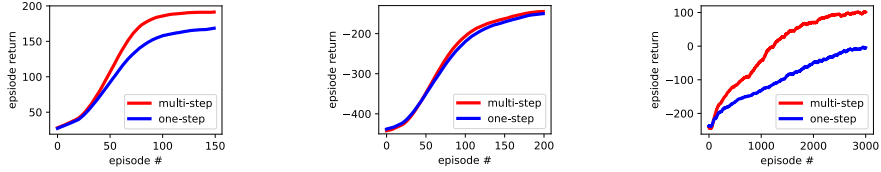

  \centering
  \begin{subfigure}[b]{0.3\textwidth}
  \includegraphics[scale=0.4]{figures/cart_1.pdf}
  \end{subfigure}
    \begin{subfigure}[b]{0.3\textwidth}
  \includegraphics[scale=0.4]{figures/acrobot_1.pdf}
  \end{subfigure}
    \begin{subfigure}[b]{0.3\textwidth}
  \includegraphics[scale=0.4]{figures/lunar_1.pdf}
  \end{subfigure}
  \caption{A comparison of actor critic equipped with the learned models (Cart Pole, Acrobot, and Lunar Lander). We set the maximum look-ahead horizon $H=8$. Results are averaged over 100 runs, and higher is better.  The multi-step model consistently matches or exceeds the one-step model.}
 \label{fig:control1}
\end{figure*}
\begin{figure*}[ht]
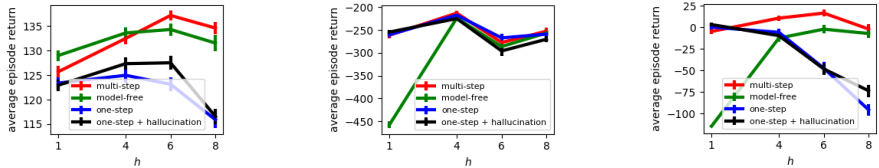

  \centering
  \begin{subfigure}[b]{0.3\textwidth}
  \includegraphics[scale=0.4]{figures/cart_2.png}
  \end{subfigure}
    \begin{subfigure}[b]{0.3\textwidth}
  \includegraphics[scale=0.4]{figures/acrobot_2.png}
  \end{subfigure}
    \begin{subfigure}[b]{0.3\textwidth}
  \includegraphics[scale=0.4]{figures/lunar_2.png}
  \end{subfigure}
  \caption{Area under the curve, which corresponds to average episode return, as a function of the look-ahead horizon $h$. Results for all three domains (Cart Pole, Acrobot, and Lunar Lander)
  are averaged over 100 runs. We add two additional baselines, namely the model-free critic, and a model-based critic trained with hallucination~\cite{talvitie_hallucination_14,venkatraman_multi_step}}.
  \label{fig:control2}
\end{figure*}

To better understand the advantage of the multi-step model, we show per-episode transition error under the two models. Figure~8
(see Appendix) clearly indicates that the multi-step model is more accurate for longer horizons. This is consistent with the theoretical results presented earlier. Note that in this experiment we did not use the model for action selection, and simply queried the policy network and sampled an action from the distribution provided by the network given a state input.
\subsection{Decision-time Planning}

\begin{figure*}
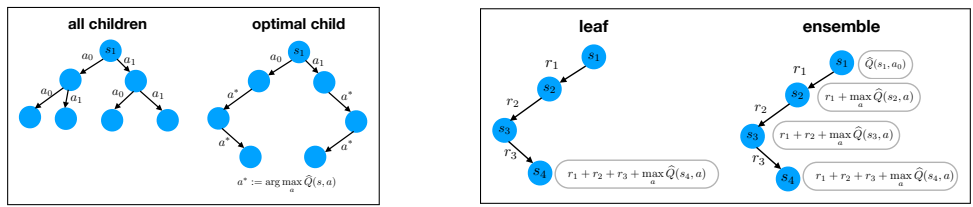

  \centering
  \begin{subfigure}{.45\textwidth}
  \fbox{\includegraphics[width=.75\linewidth]{figures/tree_construction.pdf}}
  \end{subfigure}%
    \begin{subfigure}{.45\textwidth}
  \fbox{\includegraphics[width=1\linewidth]{figures/tree_value_computation.pdf}}
  \end{subfigure}
  \caption{Tree construction (left) and action-value estimation (right) strategies. }
 \label{tree_construction}
\end{figure*}
We now use the model for action selection. A common action-selection strategy is to choose  $\arg \max_{a} \widehat{Q}(s,a)$, called the model-free strategy, hereafter. Our goal is to compare the utility of model-free strategy with its model-based counterparts. Our desire is to compare the effectiveness of the one-step model with the multi-step model in this scenario.

A key choice in decision-time planning is the strategy used to construct the tree. One approach is to expand the tree for each action in each observed state \citep{kamyar_surprising}. The main problem with this strategy is that the number of nodes grow exponentially. Alternatively, using a learned action-value function $\widehat Q$, at each state $s$ we can only expand the most promising action $a^*:=\arg \max_a \widehat Q(s,a)$. Clearly, given the same amount of computation, the second strategy can benefit from performing deeper look aheads. The two strategies are illustrated in Figure~\ref{tree_construction} (left).

Note that because the model is trained from experience, it is still only accurate up to a certain depth. Therefore, when we reach a specified planning horizon, $H$, we simply use $\max_a\widehat{Q}(s_H,a)$ as an estimate of future sum of rewards from the leaf node $s_H$. While this estimate can be erroneous, we observed that it is necessary to consider, because otherwise the agent will be myopic in the sense that it only looks at the short-term effects of its actions.

The second question is how to determine the best action given the built tree. One possibility is to add all rewards to the value of the leaf node, and go with the action that maximizes this number. As shown in Figure~\ref{tree_construction} (right), another idea is to use an ensemble where the final value of the action is computed using the mean of the $H$ different estimates along the rollout. This idea is based on the notion that in machine learning averaging many estimates can often lead to a better estimate than the individual ones \citep{schapire2003boosting,caruana_ensemble}.

The two tree expansion strategies, and the two action-value estimation strategies together constitute four possible combinations. To find the most effective combination, we first performed an experiment in the Lunar Lander setting where, given different pretrained $\widehat Q$ functions, we computed the improvement that the model-based policy offers relative to the model-free policy. We trained these $\widehat Q$ function using the DQN algorithm~\citep{DQN} and stored weights every 100 episodes, giving us 20 snapshots of $\widehat{Q}$. The models were also trained using the same amount of data that a particular $\widehat{Q}$ was trained on. We then tested the four strategies (no learning was performed during testing). For each episode, we took the frozen $\widehat Q$ network of that episode, and compared the performance of different policies given $\widehat Q$ and the trained models. In this case, by performance we mean average episode return over 20 episodes.

Results, averaged over 200 runs, are presented in Figure~\ref{policy_test} (left), and show an advantage for the ensemble and optimal-action combination (labeled optimal ensemble). Note that, in all four cases, the model used for tree search was the one-step model, and so this served as an experiment to find the best combination under this model. We then performed the same experiment with the multi-step model as shown in Figure \ref{policy_test} (right) using the best combination (i.e. optimal action expansion with ensemble value computation). We clearly see that \MCube\ is more useful in this scenario as well.

We further investigated whether the superiority in terms of action selection can actually accelerate DQN training as well. In this scenario, we ran DQN under different policies, namely model-free, model-based with the one-step model, and model-based with \MCube. In all cases, we chose a random action with probability $\epsilon=0.01$ for exploration. See Figure \ref{lunar_MBRL}, which again shows the benefit of the multi-step model for decision-time planning in model-based RL.
\begin{figure*}
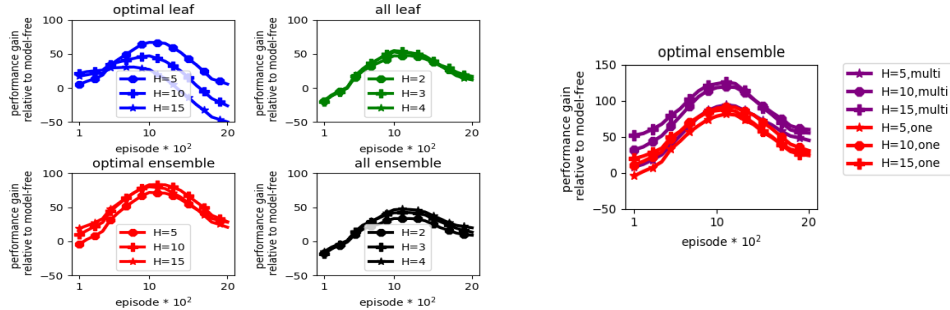

  \centering
  \begin{subfigure}{.525\textwidth}
\includegraphics[width=.95\linewidth,height=125pt]{figures/lunar_decision_baselines.png}
  \end{subfigure}%
    \begin{subfigure}{.4\textwidth}
  \includegraphics[width=.95\linewidth]{figures/lunar_decision_comparison.png}
  \end{subfigure}
\caption{A comparison between tree expansion and value-estimation strategies when using the one-step model for action selection (left). Comparison between the one-step model and \MCube\ for action selection (right). x-axis denotes the $\widehat{Q}$ of agent at that episode, and y-axis denotes performance gain over model-free. Performance is defined as episode return averaged over 20 episodes. Note the inverted-U. Initially, $\widehat{Q}$ and the model are both bad, so model provides little benefit. Towards the end $\widehat{Q}$ gets better, so using the model is not beneficial. However, we get a clear benefit in the intermediate episodes because the model is faster to learn than $\widehat{Q}$.}
\label{policy_test}
\end{figure*}
\begin{figure}
  \begin{minipage}[c]{0.5\textwidth}
    \caption{
      A comparison between the two models in the context of model-based RL. Action selection with the multi-step model can significantly boost sample efficiency of DQN. All models are trained online from the agent's experience. Results are averaged over 100 runs, and shaded regions denote standard errors.
    } \label{lunar_MBRL}
  \end{minipage}\hfill
  \begin{minipage}[c]{0.45\textwidth}
    \includegraphics[width=\textwidth]{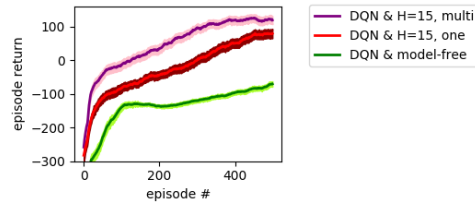}
  \end{minipage}
\end{figure}
\section{Conclusion}

We introduced an approach to multi-step model-based RL and provided results that suggest its promise.
% took an important step toward exploring
We introduced the \MCube\ model along with a new rollout procedure that exploits it. The combination was proven useful from a theoretical and practical point of view. Together, our results made a strong case for multi-step model-based RL.
\section{Future Work}

An important avenue for future work is to better explore methods appropriate to stochastic environments. (See Appendix for an extension in this direction.) Another thread to explore is an ensemble method enabled by the \MCube\ model. Specifically, we can show that an $h$-step prediction can be estimated in exponentially many ways, and combining these estimates can lead to a better final estimate (see Appendix).

Finally, large pixel-based domains have proven challenging for model-based methods~\citep{machado2018revisiting,kamyar_surprising}, so a future direction is to investigate the effectiveness of the multi-step model in such domains. 

\newpage
\bibliographystyle{named}
\bibliography{refs}

\begin{thebibliography}{}

\bibitem[\protect\citeauthoryear{Abbeel \bgroup \em et al.\egroup
  }{2006}]{abbeel_inaccurate}
Pieter Abbeel, Morgan Quigley, and Andrew~Y Ng.
\newblock Using inaccurate models in reinforcement learning.
\newblock In {\em ICML}, 2006.

\bibitem[\protect\citeauthoryear{Allen \bgroup \em et al.\egroup
  }{2017}]{allen_mac}
Cameron Allen, Kavosh Asadi, Melrose Roderick, Abdel-rahman Mohamed, George
  Konidaris, and Michael Littman.
\newblock Mean actor critic.
\newblock {\em arXiv preprint arXiv:1709.00503}, 2017.

\bibitem[\protect\citeauthoryear{Asadi \bgroup \em et al.\egroup
  }{2018a}]{asadi_equivalence}
Kavosh Asadi, Evan Cater, Dipendra Misra, and Michael~L. Littman.
\newblock Equivalence between {W}asserstein and value-aware model-based
  reinforcement learning.
\newblock In {\em ICML workshop on Prediction and Generative Modeling in
  Reinforcement Learning}, 2018.

\bibitem[\protect\citeauthoryear{Asadi \bgroup \em et al.\egroup
  }{2018b}]{asadi_lipschitz}
Kavosh Asadi, Dipendra Misra, and Michael~L. Littman.
\newblock Lipschitz continuity in model-based reinforcement learning.
\newblock In {\em ICML}, 2018.

\bibitem[\protect\citeauthoryear{Asis \bgroup \em et al.\egroup
  }{2017}]{de_asis_multi}
Kristopher~De Asis, J.~Fernando Hernandez{-}Garcia, G.~Zacharias Holland, and
  Richard~S. Sutton.
\newblock Multi-step reinforcement learning: {A} unifying algorithm.
\newblock {\em CoRR}, abs/1703.01327, 2017.

\bibitem[\protect\citeauthoryear{Azar \bgroup \em et al.\egroup
  }{2012}]{azar2012sample}
Mohammad~Gheshlaghi Azar, R{\'e}mi Munos, and Hilbert Kappen.
\newblock On the sample complexity of reinforcement learning with a generative
  model.
\newblock In {\em International Conference on Machine Learning}, 2012.

\bibitem[\protect\citeauthoryear{Azizzadenesheli \bgroup \em et al.\egroup
  }{2018}]{kamyar_surprising}
Kamyar Azizzadenesheli, Brandon Yang, Weitang Liu, Emma Brunskill, Zachary~C.
  Lipton, and Animashree Anandkumar.
\newblock Sample-efficient deep {RL} with generative adversarial tree search.
\newblock {\em CoRR}, abs/1806.05780, 2018.

\bibitem[\protect\citeauthoryear{Bartlett and
  Mendelson}{2002}]{bartlett_rademacher}
Peter~L Bartlett and Shahar Mendelson.
\newblock Rademacher and {G}aussian complexities: {R}isk bounds and structural
  results.
\newblock {\em JMLR}, 2002.

\bibitem[\protect\citeauthoryear{Bellemare \bgroup \em et al.\egroup
  }{2013}]{bellemare_atari}
Marc~G Bellemare, Yavar Naddaf, Joel Veness, and Michael Bowling.
\newblock The arcade learning environment: An evaluation platform for general
  agents.
\newblock {\em JAIR}, 2013.

\bibitem[\protect\citeauthoryear{Bengio \bgroup \em et al.\egroup
  }{2015}]{bengio2015scheduled}
Samy Bengio, Oriol Vinyals, Navdeep Jaitly, and Noam Shazeer.
\newblock Scheduled sampling for sequence prediction with recurrent neural
  networks.
\newblock In {\em Advances in Neural Information Processing Systems}, pages
  1171--1179, 2015.

\bibitem[\protect\citeauthoryear{Berkenkamp \bgroup \em et al.\egroup
  }{2017}]{berkenkamp2017safe}
Felix Berkenkamp, Matteo Turchetta, Angela Schoellig, and Andreas Krause.
\newblock Safe model-based reinforcement learning with stability guarantees.
\newblock In {\em Advances in neural information processing systems}, pages
  908--918, 2017.

\bibitem[\protect\citeauthoryear{Box}{1976}]{box_all_models_are_wrong}
George~EP Box.
\newblock Science and statistics.
\newblock {\em Journal of the American Statistical Association}, 1976.

\bibitem[\protect\citeauthoryear{Brafman and Tennenholtz}{2002}]{r_max}
Ronen~I Brafman and Moshe Tennenholtz.
\newblock R-max-a general polynomial time algorithm for near-optimal
  reinforcement learning.
\newblock {\em Journal of Machine Learning Research}, 3(Oct):213--231, 2002.

\bibitem[\protect\citeauthoryear{Brockman \bgroup \em et al.\egroup
  }{2016}]{brockman2016openai}
Greg Brockman, Vicki Cheung, Ludwig Pettersson, Jonas Schneider, John Schulman,
  Jie Tang, and Wojciech Zaremba.
\newblock Openai gym.
\newblock {\em arXiv preprint arXiv:1606.01540}, 2016.

\bibitem[\protect\citeauthoryear{Caruana \bgroup \em et al.\egroup
  }{2004}]{caruana_ensemble}
Rich Caruana, Alexandru Niculescu-Mizil, Geoff Crew, and Alex Ksikes.
\newblock Ensemble selection from libraries of models.
\newblock In {\em ICML}, 2004.

\bibitem[\protect\citeauthoryear{Dann and Brunskill}{2015}]{dann2015sample}
Christoph Dann and Emma Brunskill.
\newblock Sample complexity of episodic fixed-horizon reinforcement learning.
\newblock In {\em Advances in Neural Information Processing Systems}, pages
  2818--2826, 2015.

\bibitem[\protect\citeauthoryear{Deisenroth and
  Rasmussen}{2011}]{deisenroth_pilco}
Marc~Peter Deisenroth and Carl~Edward Rasmussen.
\newblock {PILCO}: {A} model-based and data-efficient approach to policy
  search.
\newblock In {\em ICML}, 2011.

\bibitem[\protect\citeauthoryear{Dempster \bgroup \em et al.\egroup
  }{1977}]{dempster_EM}
Arthur~P Dempster, Nan~M Laird, and Donald~B Rubin.
\newblock Maximum likelihood from incomplete data via the em algorithm.
\newblock {\em Journal of the Royal Statistical Society}, 1977.

\bibitem[\protect\citeauthoryear{Gal \bgroup \em et al.\egroup
  }{2018}]{gal2016improving}
Yarin Gal, Rowan McAllister, and Carl~Edward Rasmussen.
\newblock Improving pilco with bayesian neural network dynamics models.
\newblock In {\em ICML}, 2018.

\bibitem[\protect\citeauthoryear{Gelada \bgroup \em et al.\egroup
  }{2019}]{deep_mdp}
Carles Gelada, Saurabh Kumar, Jacob Buckman, Ofir Nachum, and Marc~G.
  Bellemare.
\newblock {D}eep{MDP}: Learning continuous latent space models for
  representation learning.
\newblock In Kamalika Chaudhuri and Ruslan Salakhutdinov, editors, {\em
  Proceedings of the 36th International Conference on Machine Learning},
  volume~97 of {\em Proceedings of Machine Learning Research}, pages
  2170--2179, Long Beach, California, USA, 09--15 Jun 2019. PMLR.

\bibitem[\protect\citeauthoryear{Hart \bgroup \em et al.\egroup
  }{1972}]{hart1972correction}
Peter~E Hart, Nils~J Nilsson, and Bertram Raphael.
\newblock Correction to a formal basis for the heuristic determination of
  minimum cost paths.
\newblock {\em ACM SIGART Bulletin}, pages 28--29, 1972.

\bibitem[\protect\citeauthoryear{Jiang \bgroup \em et al.\egroup
  }{2015}]{nan_planning}
Nan Jiang, Alex Kulesza, Satinder Singh, and Richard Lewis.
\newblock The dependence of effective planning horizon on model accuracy.
\newblock In {\em Proceedings of the 2015 International Conference on
  Autonomous Agents and Multiagent Systems}, pages 1181--1189. International
  Foundation for Autonomous Agents and Multiagent Systems, 2015.

\bibitem[\protect\citeauthoryear{Kearns and Singh}{2002}]{kearns_near_optimal}
Michael~J. Kearns and Satinder~P. Singh.
\newblock Near-optimal reinforcement learning in polynomial time.
\newblock {\em Machine Learning}, 2002.

\bibitem[\protect\citeauthoryear{Kocsis and
  Szepesv{\'{a}}ri}{2006}]{kocsis_ucb}
Levente Kocsis and Csaba Szepesv{\'{a}}ri.
\newblock Bandit based {M}onte-{C}arlo planning.
\newblock In {\em ECML}, 2006.

\bibitem[\protect\citeauthoryear{Kurutach \bgroup \em et al.\egroup
  }{2018}]{ensemble_kurutach}
Thanard Kurutach, Ignasi Clavera, Yan Duan, Aviv Tamar, and Pieter Abbeel.
\newblock Model-ensemble trust-region policy optimization.
\newblock In {\em 6th International Conference on Learning Representations,
  {ICLR} 2018, Vancouver, BC, Canada, April 30 - May 3, 2018, Conference Track
  Proceedings}, 2018.

\bibitem[\protect\citeauthoryear{Ledoux and
  Talagrand}{2013}]{ledoux_probability}
Michel Ledoux and Michel Talagrand.
\newblock {\em Probability in Banach Spaces: isoperimetry and processes}.
\newblock Springer Science \& Business Media, 2013.

\bibitem[\protect\citeauthoryear{Lehnert \bgroup \em et al.\egroup
  }{2018}]{lehnert2018value}
Lucas Lehnert, Romain Laroche, and Harm van Seijen.
\newblock On value function representation of long horizon problems.
\newblock In {\em Thirty-Second AAAI Conference on Artificial Intelligence},
  2018.

\bibitem[\protect\citeauthoryear{Levine and Abbeel}{2014}]{levine2014learning}
Sergey Levine and Pieter Abbeel.
\newblock Learning neural network policies with guided policy search under
  unknown dynamics.
\newblock In {\em Advances in Neural Information Processing Systems}, pages
  1071--1079, 2014.

\bibitem[\protect\citeauthoryear{Luo \bgroup \em et al.\egroup
  }{2018}]{luo2018algorithmic}
Yuping Luo, Huazhe Xu, Yuanzhi Li, Yuandong Tian, Trevor Darrell, and Tengyu
  Ma.
\newblock Algorithmic framework for model-based deep reinforcement learning
  with theoretical guarantees.
\newblock {\em arXiv preprint arXiv:1807.03858}, 2018.

\bibitem[\protect\citeauthoryear{Machado \bgroup \em et al.\egroup
  }{2018}]{machado2018revisiting}
Marlos~C Machado, Marc~G Bellemare, Erik Talvitie, Joel Veness, Matthew
  Hausknecht, and Michael Bowling.
\newblock Revisiting the arcade learning environment: Evaluation protocols and
  open problems for general agents.
\newblock {\em Journal of Artificial Intelligence Research}, 61:523--562, 2018.

\bibitem[\protect\citeauthoryear{Maurer}{2016}]{maurer_vector}
Andreas Maurer.
\newblock A vector-contraction inequality for rademacher complexities.
\newblock In {\em International Conference on Algorithmic Learning Theory},
  pages 3--17. Springer, 2016.

\bibitem[\protect\citeauthoryear{Mnih \bgroup \em et al.\egroup }{2013}]{DQN}
Volodymyr Mnih, Koray Kavukcuoglu, David Silver, Alex Graves, Ioannis
  Antonoglou, Daan Wierstra, and Martin Riedmiller.
\newblock Playing atari with deep reinforcement learning.
\newblock {\em arXiv preprint arXiv:1312.5602}, 2013.

\bibitem[\protect\citeauthoryear{Mnih \bgroup \em et al.\egroup
  }{2016}]{mnih_asynchronous}
Volodymyr Mnih, Adria~Puigdomenech Badia, Mehdi Mirza, et~al.
\newblock Asynchronous methods for deep reinforcement learning.
\newblock In {\em ICML}, 2016.

\bibitem[\protect\citeauthoryear{Moerland \bgroup \em et al.\egroup
  }{2017}]{moerland_multi_modal}
Thomas~M Moerland, Joost Broekens, and Catholijn~M Jonker.
\newblock Learning multimodal transition dynamics for model-based reinforcement
  learning.
\newblock {\em arXiv preprint arXiv:1705.00470}, 2017.

\bibitem[\protect\citeauthoryear{Mohri \bgroup \em et al.\egroup
  }{2012}]{mohri_foundation}
Mehryar Mohri, Afshin Rostamizadeh, and Ameet Talwalkar.
\newblock {\em Foundations of Machine Learning}.
\newblock {MIT} Press, 2012.

\bibitem[\protect\citeauthoryear{Moore and Atkeson}{1993}]{moore_prioritized}
Andrew~W Moore and Christopher~G Atkeson.
\newblock Prioritized sweeping: Reinforcement learning with less data and less
  time.
\newblock {\em Machine learning}, 13(1), 1993.

\bibitem[\protect\citeauthoryear{Nagabandi \bgroup \em et al.\egroup
  }{2018}]{nagabandi2018neural}
Anusha Nagabandi, Gregory Kahn, Ronald~S Fearing, and Sergey Levine.
\newblock Neural network dynamics for model-based deep reinforcement learning
  with model-free fine-tuning.
\newblock In {\em 2018 IEEE International Conference on Robotics and Automation
  (ICRA)}, pages 7559--7566. IEEE, 2018.

\bibitem[\protect\citeauthoryear{Oh \bgroup \em et al.\egroup
  }{2015}]{oh2015action}
Junhyuk Oh, Xiaoxiao Guo, Honglak Lee, Richard~L Lewis, and Satinder Singh.
\newblock Action-conditional video prediction using deep networks in atari
  games.
\newblock In {\em Advances in neural information processing systems}, pages
  2863--2871, 2015.

\bibitem[\protect\citeauthoryear{Precup and Sutton}{1998}]{precup_multi}
Doina Precup and Richard~S Sutton.
\newblock Multi-time models for temporally abstract planning.
\newblock In {\em Advances in neural information processing systems}, pages
  1050--1056, 1998.

\bibitem[\protect\citeauthoryear{Precup}{2000}]{precup2000eligibility}
Doina Precup.
\newblock Eligibility traces for off-policy policy evaluation.
\newblock {\em Computer Science Department Faculty Publication Series},
  page~80, 2000.

\bibitem[\protect\citeauthoryear{Puterman}{2014}]{puterman_mdps}
Martin~L Puterman.
\newblock {\em Markov decision processes: discrete stochastic dynamic
  programming}.
\newblock John Wiley \& Sons, 2014.

\bibitem[\protect\citeauthoryear{Ross \bgroup \em et al.\egroup
  }{2011}]{ross_dagger}
St{\'e}phane Ross, Geoffrey Gordon, and Drew Bagnell.
\newblock A reduction of imitation learning and structured prediction to
  no-regret online learning.
\newblock In {\em AISTATS}, 2011.

\bibitem[\protect\citeauthoryear{Rummery and
  Niranjan}{1994}]{rummery_q_learning}
GA~Rummery and M~Niranjan.
\newblock On-line {Q}-learning using connectionist systems.
\newblock Technical Report CUED/F-INFENG/TR 166, U.\ of Cambridge, Dept.\ of
  Engineering, 1994.

\bibitem[\protect\citeauthoryear{Russell and Norvig}{2016}]{russel_norvig}
Stuart~J Russell and Peter Norvig.
\newblock {\em Artificial intelligence: a modern approach}.
\newblock Malaysia; Pearson Education Limited,, 2016.

\bibitem[\protect\citeauthoryear{Schapire}{2003}]{schapire2003boosting}
Robert~E Schapire.
\newblock The boosting approach to machine learning: An overview.
\newblock In {\em Nonlinear estimation and classification}, pages 149--171.
  Springer, 2003.

\bibitem[\protect\citeauthoryear{Shalev-Shwartz and
  Ben-David}{2014}]{shai_shai}
Shai Shalev-Shwartz and Shai Ben-David.
\newblock {\em Understanding machine learning: From theory to algorithms}.
\newblock Cambridge university press, 2014.

\bibitem[\protect\citeauthoryear{Shyam \bgroup \em et al.\egroup
  }{2018}]{model_based_active_exploration}
Pranav Shyam, Wojciech Jaskowski, and Faustino Gomez.
\newblock Model-based active exploration.
\newblock {\em CoRR}, abs/1810.12162, 2018.

\bibitem[\protect\citeauthoryear{Silver and
  Ciosek}{2012}]{silver2012compositional}
David Silver and Kamil Ciosek.
\newblock Compositional planning using optimal option models.
\newblock In {\em Proceedings of the 29th International Coference on
  International Conference on Machine Learning}, pages 1267--1274. Omnipress,
  2012.

\bibitem[\protect\citeauthoryear{Silver \bgroup \em et al.\egroup
  }{2016}]{silver_go}
David Silver, Aja Huang, Chris~J. Maddison, Arthur Guez, et~al.
\newblock Mastering the game of go with deep neural networks and tree search.
\newblock {\em Nature}, 2016.

\bibitem[\protect\citeauthoryear{Silver \bgroup \em et al.\egroup
  }{2017}]{silver2017predictron}
David Silver, Hado van Hasselt, Matteo Hessel, Tom Schaul, Arthur Guez, Tim
  Harley, Gabriel Dulac-Arnold, David Reichert, Neil Rabinowitz, Andre Barreto,
  et~al.
\newblock The predictron: End-to-end learning and planning.
\newblock In {\em Proceedings of the 34th International Conference on Machine
  Learning-Volume 70}, pages 3191--3199. JMLR. org, 2017.

\bibitem[\protect\citeauthoryear{Singh and
  Sutton}{1996}]{singh1996reinforcement}
Satinder~P Singh and Richard~S Sutton.
\newblock Reinforcement learning with replacing eligibility traces.
\newblock {\em Machine learning}, 22(1-3):123--158, 1996.

\bibitem[\protect\citeauthoryear{Sun \bgroup \em et al.\egroup
  }{2018}]{sun2018model}
Wen Sun, Nan Jiang, Akshay Krishnamurthy, Alekh Agarwal, and John Langford.
\newblock Model-based reinforcement learning in contextual decision processes.
\newblock {\em arXiv preprint arXiv:1811.08540}, 2018.

\bibitem[\protect\citeauthoryear{Sutton and Barto}{2018}]{sutton_the_book}
Richard~S Sutton and Andrew~G Barto.
\newblock {\em Reinforcement Learning: An Introduction}.
\newblock {MIT} Press, 2018.

\bibitem[\protect\citeauthoryear{Sutton \bgroup \em et al.\egroup
  }{1999}]{sutton_option}
Richard~S Sutton, Doina Precup, and Satinder Singh.
\newblock Between mdps and semi-mdps: A framework for temporal abstraction in
  reinforcement learning.
\newblock {\em Artificial intelligence}, 112(1-2):181--211, 1999.

\bibitem[\protect\citeauthoryear{Sutton \bgroup \em et al.\egroup
  }{2000}]{sutton_policy_gradient}
Richard~S Sutton, David~A McAllester, Satinder~P Singh, and Yishay Mansour.
\newblock Policy gradient methods for reinforcement learning with function
  approximation.
\newblock In {\em Neurips}, 2000.

\bibitem[\protect\citeauthoryear{Sutton \bgroup \em et al.\egroup
  }{2008}]{sutton_linear_dyna}
Richard~S. Sutton, Csaba Szepesv{\'{a}}ri, Alborz Geramifard, and Michael~H.
  Bowling.
\newblock Dyna-style planning with linear function approximation and
  prioritized sweeping.
\newblock In {\em UAI}, 2008.

\bibitem[\protect\citeauthoryear{Sutton}{1990}]{sutton1990integrated}
Richard~S Sutton.
\newblock Integrated architectures for learning, planning, and reacting based
  on approximating dynamic programming.
\newblock In {\em Machine Learning Proceedings 1990}, pages 216--224. Elsevier,
  1990.

\bibitem[\protect\citeauthoryear{Sutton}{1995}]{sutton_mixture}
Richard~S Sutton.
\newblock {TD} models: Modeling the world at a mixture of time scales.
\newblock In {\em Machine Learning Proceedings 1995}. Elsevier, 1995.

\bibitem[\protect\citeauthoryear{Szita and
  Szepesv{\'a}ri}{2010}]{szita2010model}
Istv{\'a}n Szita and Csaba Szepesv{\'a}ri.
\newblock Model-based reinforcement learning with nearly tight exploration
  complexity bounds.
\newblock In {\em Proceedings of the 27th International Conference on Machine
  Learning (ICML-10)}, pages 1031--1038, 2010.

\bibitem[\protect\citeauthoryear{Talvitie}{2014}]{talvitie_hallucination_14}
Erik Talvitie.
\newblock Model regularization for stable sample rollouts.
\newblock In {\em UAI}, 2014.

\bibitem[\protect\citeauthoryear{Talvitie}{2017}]{talvitie_2017}
Erik Talvitie.
\newblock Self-correcting models for model-based reinforcement learning.
\newblock In {\em AAAI}, 2017.

\bibitem[\protect\citeauthoryear{van Seijen and
  Sutton}{2015}]{van_seijen_deeper}
Harm van Seijen and Rich Sutton.
\newblock A deeper look at planning as learning from replay.
\newblock In {\em ICML}, 2015.

\bibitem[\protect\citeauthoryear{Venkatraman \bgroup \em et al.\egroup
  }{2015}]{venkatraman_multi_step}
Arun Venkatraman, Martial Hebert, and J~Andrew Bagnell.
\newblock Improving multi-step prediction of learned time series models.
\newblock In {\em AAAI}, 2015.

\bibitem[\protect\citeauthoryear{Wit \bgroup \em et al.\egroup
  }{2012}]{wit_all_models}
Ernst Wit, Edwin van~den Heuvel, and Jan-Willem Romeijn.
\newblock ‘{A}ll models are wrong...’: {A}n introduction to model
  uncertainty.
\newblock {\em Statistica Neerlandica}, 66(3), 2012.

\end{thebibliography}
\newpage
\section{Appendix}
\subsection{Proofs}
\begin{figure}
	\centering
	\includegraphics[width=0.5\linewidth]{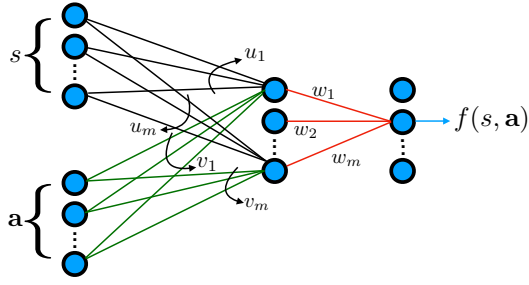}
	\caption{Architecture used to represent a mapping from a state to a single dimension of the next state $h$ steps into the future. We assume $\norm{w}_{1}\leq 1, \norm{u_i}\leq 1$, and $\norm{v_i}\leq 1\ \forall i\in\integer: i\in \{1,m\}$. We show a bound on the Rademacher complexity of this network.}
	\label{fig:nn_architecure}
\end{figure}
\rademacherNeuralNets*
\begin{proof}
First note that:
	\begin{equation*}
	\begin{aligned}
	&\rademacher{\fancyT_h}:=\expected{s^i,\aVec^i,\sigma_{ij}}{\sup_{\That_h \in \fancyT_h}\frac{1}{|D|}\sum_{i=1}^{|D|}\sum_{j=1}^d\sigma_{ij}\That_h(s^i,\aVec^i_h)_j}\\
	&=\expected{s^i,\aVec^i_h,\sigma_{ij}}{\sup_{\That_h:=\langle f_1,...,f_d \rangle}\frac{1}{|D|}\sum_{i=1}^{|D|}\sum_{j=1}^d\sigma_{ij}f_{j}(s^i,\aVec^i_h)}\\
	&=\expected{s^i,\aVec^i_h,\sigma_{i}}{\sum_{j=1}^d\sup_{f\in \fancyF}\frac{1}{|D|}\sum_{i=1}^{|D|}\sigma_{i}f(s^i,\aVec^i_h)}\\
	&= d\ \expected{s^i,\aVec^i_h,\sigma_{i}}{\sup_{f\in \fancyF}\frac{1}{|D|}\sum_{i=1}^{|D|}\sigma_{i}f(s^i,\aVec^i_h)}=d\ \rademacher{\fancyF}\ .
	\end{aligned}
\end{equation*}
So we rather focus on the set of scalar-valued functions $\fancyF$. Moreover, to represent the model, we set $\fancyF$ to be neural network with bounded weights as characterized in Figure \ref{fig:nn_architecure}.
\begin{equation*}
\begin{aligned}
	&\rademacher{\fancyF}:=\expected{s^i,{\aVec}^i_h,\sigma_i}{  \sup_{f\in\fancyF} \frac{1}{|D|} \sum_{i=1}^{|D|} \sigma_i f(s^i,\aVec^i)}\\
	&= \expected{s^i,{\aVec}^i_h,\sigma_i}{\sup_{w_j,u_j,v_j} \frac{1}{|D|} \sum_{i=1}^{|D|} \sigma_i  \sum_{j=1}^{m}w_j \ReLU(u^\top_j s^i+v^\top_j \aVec^i_h)}\\
	&\leq  \expected{s^i,{\aVec}^i_h,\sigma_i}{ \sup_{w_j,u_j,v_j} \frac{1}{|D|}  \sum_{j=1}^{m}w_j \big|\sum_{i=1}^{|D|} \sigma_i\ReLU(u^\top_j s^i+v^\top_j \aVec^i_h)\big|}\\
	&=  \expected{s^i,{\aVec}^i_h,\sigma_i}{\sup_{u,v} \frac{1}{|D|}   \big|\sum_{i=1}^{|D|} \sigma_i\ReLU(u^\top s^i+v^\top \aVec^i_h)\big| }\\
\end{aligned}
\end{equation*}
Now note that $Lip(ReLU)=1$, so by Theorem 4.12 in \cite{ledoux_probability}:
\begin{equation*}
\begin{aligned}
	&\leq  \expected{s^i,{\aVec}^i_h,\sigma_i}{\sup_{u,v} \frac{1}{|D|}   \big|\sum_{i=1}^{|D|} \sigma_i (u^\top s^i+v^\top \aVec^i_h)\big| }\\
	&=\expected{s^i,\sigma_i}{\sup_{u} \frac{1}{|D|}   \big|u^\top\sum_{i=1}^{|D|} \sigma_i s^i\big|}+\expected{\aVec^i_h,\sigma_i}{\sup_{v} \frac{1}{|D|}   \big|v^\top\sum_{i=1}^{|D|}  \sigma_i \aVec^i_h\big|}\\
	&\text{$\norm{u}_1\leq 1$ and  $\norm{v}_1\leq 1$, so using Cauchy-Shwartz:}\\
	&\leq \expected{s^i,\sigma_i}{\frac{1}{|D|} \norm{\sum_{i=1}^{|D|} \sigma_i s_i}_2 }+\expected{\aVec^i_h,\sigma_i}{ \frac{1}{|D|} \norm{\sum_{i=1}^{|D|}  \sigma_i \aVec_i}_2 }\\
	&\text{Due to Jensen's inequality for concave function $f(x)=\sqrt{x}$:}\\
	&\leq \expected{s^i}{\frac{1}{|D|} \sqrt{\expected{\sigma_i}{\norm{\sum_{i=1}^{|D|} \sigma_i s^i}_{2}}} }+\expected{\aVec^i_h}{ \frac{1}{|D|} \sqrt{\expected{\sigma_i}{\norm{\sum_{i=1}^{|D|} \sigma_i \aVec^i_h}_{2}}} }\\
	&=\frac{1}{|D|}\Big(\expected{s^i}{\sqrt{\sum_{i=1}^{|D|}\norm{ s^i}_{2}} }+\expected{\aVec^i_h}{\sqrt{\sum_{i=1}^{|D|}\norm{ \aVec^i_h}_{2}} }\Big)
	\end{aligned}
\end{equation*}
\end{proof}

\begin{lemma}{\cite{asadi_equivalence}}
	Define the $H$-step value function: $$V^{\pi}_{H}(s):=\expected{s_i,a_i}{\sum_{i=1}^{H} R(s_i,a_i)}\ ,$$
	and assume a Lipschitz reward function $R$ with constant $Lip^{\fancyA}(R)$. Then:
	$$\Lip{}{}{(V^{\pi}_{H})}\leq\Lip{}{\fancyA}{(R)} H\ .$$
	\label{lemma:lipschitz_value}
\end{lemma}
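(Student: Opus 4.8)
The plan is to prove the bound by induction on the horizon $H$, using the finite-horizon Bellman decomposition of the value function together with the smoothness assumptions on the reward and the transition dynamics. The first step is to write the recursion
$$V^\pi_H(s) = \expected{a\sim\pi(\cdot\mid s)}{R(s,a) + V^\pi_{H-1}\big(T_1(s,a)\big)}\ ,$$
which follows by peeling off the first reward term from the sum $\sum_{i=1}^H R(s_i,a_i)$ and recognizing the remaining $H-1$ rewards as the horizon-$(H-1)$ value function evaluated at the deterministic successor state $T_1(s,a)$. This reduces the global question to controlling how a perturbation in the starting state propagates through a single step.

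For the base case $H=1$ I would bound $\big|V^\pi_1(s)-V^\pi_1(s')\big|$ directly: for each fixed action $a$ the map $R(\cdot,a)$ is Lipschitz with constant at most $\Lip{}{\fancyA}{(R)}$ by the uniform-in-$\fancyA$ definition, so averaging over $a$ preserves this constant and yields $\Lip{}{}{(V^\pi_1)}\le\Lip{}{\fancyA}{(R)}$. For the inductive step, assuming $\Lip{}{}{(V^\pi_{H-1})}\le\Lip{}{\fancyA}{(R)}(H-1)$, I would estimate
$$\big|V^\pi_H(s)-V^\pi_H(s')\big|\le\expected{a}{\big|R(s,a)-R(s',a)\big|}+\expected{a}{\big|V^\pi_{H-1}\big(T_1(s,a)\big)-V^\pi_{H-1}\big(T_1(s',a)\big)\big|}\ .$$
The first term is at most $\Lip{}{\fancyA}{(R)}\,d(s,s')$; the second is at most $\Lip{}{}{(V^\pi_{H-1})}\,\Lip{}{\fancyA}{(T_1)}\,d(s,s')$, invoking the inductive hypothesis on $V^\pi_{H-1}$ and the uniform Lipschitz continuity of $T_1$ to bound $d\big(T_1(s,a),T_1(s',a)\big)$ by $\Lip{}{\fancyA}{(T_1)}\,d(s,s')$. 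Assuming non-expansive dynamics, $\Lip{}{\fancyA}{(T_1)}\le 1$, the two terms combine to $\big(\Lip{}{\fancyA}{(R)}+\Lip{}{\fancyA}{(R)}(H-1)\big)d(s,s')=\Lip{}{\fancyA}{(R)}\,H\,d(s,s')$, which closes the induction.

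The main obstacle is the dependence of the action distribution $\pi(\cdot\mid s)$ on the starting state, which the recursion above suppresses by comparing the two rollouts under a common action. In the deterministic-transition regime adopted by the paper this can be made rigorous by coupling the two rollouts to share a fixed action sequence $\aVec_H$ and bounding the value gap uniformly over sequences before taking the expectation: under non-expansive transitions an easy sub-induction shows the per-step state discrepancy $d(s_i,s_i')$ never exceeds $d(s,s')$, so the summed reward discrepancy is at most $H\,\Lip{}{\fancyA}{(R)}\,d(s,s')$ for every sequence, hence also for any average over sequences. The only genuine subtlety is therefore the implicit assumption $\Lip{}{\fancyA}{(T_1)}\le 1$; without it the clean multiplier $H$ would be replaced by the geometric factor $\sum_{i=0}^{H-1}\Lip{}{\fancyA}{(T_1)}^{\,i}$, so I would make this non-expansiveness explicit as the hypothesis under which the stated constant $H$ is exactly attained.
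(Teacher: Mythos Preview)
The paper does not supply its own proof of this lemma; it is quoted in the Appendix with a citation to an external reference and then used as a black box in the proofs of Theorems~1 and~2. There is therefore no in-paper argument to compare your proposal against.

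On the substance of your proof, the inductive Bellman approach is the natural one and your identification of the implicit non-expansiveness hypothesis $\Lip{}{\fancyA}{(T_1)}\le 1$ is correct: without it the constant would be the geometric sum you wrote rather than $H$. The one place your argument is not airtight is the coupling step for the state-dependent policy. Your uniform-in-$\aVec_H$ bound controls the integrand $\big|\mathrm{Return}(s,\aVec_H)-\mathrm{Return}(s',\aVec_H)\big|$, but $V^\pi_H(s)$ and $V^\pi_H(s')$ integrate these returns against \emph{different} distributions over $\aVec_H$ (the ones induced by $\pi$ along the trajectory from $s$ versus from $s'$). A uniform bound on the integrand does not by itself bound the difference of two integrals taken against distinct measures; an add-and-subtract decomposition leaves a residual term of the form $\sum_{\aVec_H}\big(\Pr(\aVec_H\mid s,\pi)-\Pr(\aVec_H\mid s',\pi)\big)\,\mathrm{Return}(s',\aVec_H)$ that your coupling does not address. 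Closing this requires an additional hypothesis---either a Lipschitz assumption on $\pi$, or simply that the policy agrees at the relevant state pairs, which is exactly the assumption the paper itself inserts explicitly inside its proof of Theorem~2. Either fix is routine; just make the extra hypothesis explicit alongside the non-expansiveness one.
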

\oneStepValueError*
\begin{proof}
\begin{equation*}
	\begin{aligned}
		& \Big|\expected{s_1}{V^{\pi}_{H}(s_1)-\widehat V^{\pi}_{H}(s_1)}\Big|\\
		&=\Big|\expected{s_1,a_1}{V^{\pi}_{H-1}\big(T_1(s_1,a_1)\big)-\widehat V^{\pi}_{H-1}\big(\widehat T_1(s_1,a_1)\big)}\Big|\\
		&=\Big|\expected{s_1,a_1}{V^{\pi}_{H-1}\big(T_1(s_1,a_1)\big)- \widehat V^{\pi}_{H-1}\big(T_1(s_1,a_1)\big)\\
		&\quad + \widehat V^{\pi}_{H-1}\big(T_1(s_1,a_1)\big) +\widehat V^{\pi}_{H-1}\big(\widehat T_1(s_1,a_1)\big)}\Big|\\
		&\leq \Big|\expected{s_1,a_1}{V^{\pi}_{H-1}\big(T_1(s_1,a_1)\big)- \widehat V^{\pi}_{H-1}\big(T_1(s_1,a_1)\big)}\Big|\\
		&\quad +\Big|\expected{s_1,a_1}{ \widehat V^{\pi}_{H-1}\big(T_1(s_1,a_1)\big) -\widehat V^{\pi}_{H-1}\big(\widehat T_1(s_1,a_1)\big)}\Big|\\
		&\leq \Big|\expected{s_2}{V^{\pi}_{H-1}(s_2)- \widehat V^{\pi}_{H-1}(s_2)}\Big|\\
		&\quad +\underbrace{\expected{s_1,a_1}{ \big|\widehat V^{\pi}_{H-1}\big(T_1(s_1,a_1)\big) -\widehat V^{\pi}_{H-1}\big(\widehat T_1(s_1,a_1)\big)}\big|}_{\leq\Lip{}{\fancyA}{(R)}(H-1)\expected{s_1,a_1}{ \norm{T_1(s_1,a_1) -\widehat T_1(s_1,a_1)}} \  \text{(due to Lemma \ref{lemma:lipschitz_value}})}\\
	\end{aligned}
\end{equation*}	
We reach the desired result by expanding the first term for $H-2$ more times.
\end{proof}

\multiStepValueError*
\begin{proof}
	\begin{equation*}
		\begin{aligned}
			& \Big|\expected{s_1}{V^{\pi}_{H}(s_1)-\widehat V^{\pi}_{H}(s_1)}\Big|\\
			&=\Big|\expected{s_1,\aVec_h,\aVec'_h}{ \sum_{h=1}^{H} R\big(T_h(s_1,\aVec_h)\big)- \sum_{h=1}^H R\big(\widehat T_h(s_1,\aVec'_h)\big)} \Big|\\
			& \leq \sum_{h=1}^{H}\expected{s_1,\aVec_h,\aVec'_h}{ \big|R\big(T_h(s_1,\aVec_h)\big) -R\big(\widehat T_h(s_1,\aVec'_h)\big)\big|}\\
			& \textup{assuming}\ \pi\big(\cdot|T(s,\aVec_h)\big)=\pi\big(\cdot|\That(s,\aVec'_h)\big):\\
			& =\sum_{h=1}^{H}\expected{s_1,\aVec_h}{ \big|R\big(T_h(s_1,\aVec_h)\big) -R\big(\widehat T_h(s_1,\aVec   `_h)\big)\big|}\\
			& \textup{due to Lemma}\ \ref{lemma:lipschitz_value}:\\
			& \leq \Lip{}{\fancyA}{(R)}\sum_{h=1}^{H}\expected{s_1,\aVec_h}{\norm{T_h(s_1,\aVec_h) -\widehat T_h(s_1,\aVec_h)}}\ .
		\end{aligned}
	\end{equation*}
\end{proof}

\RademacherLemma*
\begin{proof}
	We heavily make use of techniques provided by \namecite{bartlett_rademacher}. First note that, due to the definition of $\sup$, we clearly have:
	\begin{equation}
			\begin{aligned}
		&\expected{s,\aVec}{\norm{T_h(s,\aVec)-\That_h(s,\aVec)}_{1}} \\
		&\leq \frac{1}{|D|}\sum_{i=1}^{|D|}\norm{T_h(s^i,\aVec^i_h)-\That_h(s^i,\aVec^i_h)}_{1}\\
		&+\sup_{\That_h \in \fancyT}\Big\{ \expected{s,\aVec_h}{\norm{T_h(s,\aVec_h)-\That_h(s,\aVec_h)}_{1}}  \\ &\quad -\frac{1}{|D|}\sum_{i=1}^{|D|}\norm{T_h(s^i,\aVec^i_h)-\That_h(s^i,\aVec^i_h)}_{1}\Big\}\ .
	\end{aligned}
	\label{eq:sup_property}
	\end{equation}
We define $\Phi$ to be the $\sup$ in the right hand side of the above bound:
\begin{equation*}
\begin{aligned}
    & \Phi(s^{1...|D|},\aVec_h^{1...|D|}):=\sup_{\That_h \in \fancyT}\Big\{ \expected{s,\aVec}{\norm{T_h(s,\aVec_h)-\That_h(s,\aVec_h)}_{1}}  \\ &\quad -\frac{1}{|D|}\sum_{i=1}^{|D|}\norm{T_h(s^i,\aVec^i_h)-\That_h(s^i,\aVec^i_h)}_{1}\Big\}\ .\\
\end{aligned}
\end{equation*}

We can bound $\expected{s^i,\aVec^i_h}{\Phi}$ in terms of $\rademacher{\fancyT_h}$:
\begin{equation*}
\begin{aligned}
	&\expected{s^i,\aVec^i_h}{\Phi}=\expected{s^i,\aVec^i_h}{\sup_{\That_h \in \fancyT} \big \{\expected{s,\aVec_h}{\norm{T_h(s,\aVec_h)-\That_h(s,\aVec_h)}_{1}}\nonumber\\
	&-  \frac{1}{|D|}\sum_{i=1}^{|D|}\norm{T_h(s^i,\aVec^i_h)-\That_h(s^i,\aVec^i_h)}_{1}\big\}}\nonumber\\
	&=\expected{s^i,\aVec^i_h}{\sup_{\That_h \in \fancyT}\big\{\expected{s'^i,\aVec'^i_h}{\sum_{i=1}^{|D|} \frac{1}{|D|}\big(\norm{T_h(s'^i,\aVec'^i_h)-\That_h(s'^i,\aVec'^i_h)}_{1}\nonumber\\
	&-\norm{T_h(s^i,\aVec^i_h)-\That_h(s^i,\aVec^i_h)}_{1}\Big]\big)\big\}}}\nonumber\\
	& \text{Due to Jensen's inequality:}\nonumber\\
	&\leq \expected{s^i,\aVec^i,s'^i,\aVec'^i}{\sup_{\That_h \in \fancyT}\big\{\frac{1}{|D|}\sum_{i=1}^{|D|} \big(\norm{T_h(s'^i,\aVec'^i_h)-\That_h(s'^i,\aVec'^i_h)}_{1}\nonumber\\
	&-\norm{T_h(s^i,\aVec^i_h)-\That_h(s^i,\aVec^i_h)}_{1}\big)\big\}}\nonumber\\
	& \text{Due to $\sigma_i$ uniformly randomly chosen from \{-1,1\}}:\nonumber\\
	&= \expected{s^i,\aVec^i_h,{s'^i},\aVec'^i_h,\sigma_i}{\!\sup_{\That_h \in \fancyT}\big\{\frac{1}{|D|}\sum_{i=1}^{|D|} \!\sigma_i\!\big(\!\norm{T_h(s'_i,\aVec_i')-\That_h(s'^i,\aVec'^i)}_{1}\nonumber\\
	&-\norm{T_h(s^i,\aVec^i_h)-\That_h(s^i,\aVec^i_h)}_{1}\big)\big\}}\nonumber\\
	&\leq 2\ \expected{s^i,\aVec^i_h,\sigma_i}{\sup_{\That_h\in \fancyT}\frac{1}{|D|}\sum_{i=1}^{|D|}\sigma_i\norm{T_h(s^i,\aVec^i)-\That_h(s^i,\aVec^i_h)}_{1}}\nonumber\\
	& \text{Due to Corollary 4 of \namecite{maurer_vector}:} \nonumber\\
	&\leq 2\sqrt{2}\ \expected{s^i,\aVec^i_h,\sigma_{ij}}{\sup_{\That_h \in \fancyT}\frac{1}{|D|}\sum_{i=1}^{|D|}\sum_{j=1}^d\sigma_{ij}\That_h(s^i,\aVec^i_h)_j}\nonumber\\
	&= 2\sqrt{2}\rademacher{\fancyT_h} \ . 
	\label{eq:bound_phi_with_rademacher}
	\end{aligned}
\end{equation*}
Next note that the function $\Phi$ satisfies:
$$|\Phi(s^{1...i...|D|},\aVec^{1...i...|D|}_h)-\Phi(s^{1...i'...|D|},\aVec^{1..i'...|D|}_h)|\leq\frac{2d}{|D|}\ .$$
So using MacDiarmid's inequality:
\begin{equation*}
	\text{Pr}(\Phi\leq\expected{}{\Phi}+d\sqrt{\frac{\ln{\frac{1}{\delta}}}{|D|}})\geq 1-\delta
	\label{eq:MacDiarmid}
\end{equation*}
Combining the two previous results, we can write:
\begin{equation}
	\text{Pr}(\Phi\leq 2\sqrt{2}\rademacher{\fancyT}  +d\sqrt{\frac{\ln{\frac{1}{\delta}}}{|D|}})\geq 1-\delta
	\label{eq:bound_phi_with_rad}
\end{equation}
Finally, using (\ref{eq:sup_property}) and (\ref{eq:bound_phi_with_rad}) we can conclude the proof.\end{proof}
\finalTheorem*
\begin{proof}
Starting from the one-step case, from Theorem \ref{theorem:value_error_with_one_step_model} we have:
\begin{eqnarray*}
\Big|\expected{s_1}{V^{\pi}_{H}(s_1)-\widehat V^{\pi}_{H}(s_1)}\Big|
			&\leq&  \Lip{}{\fancyA}{(R)}\sum_{h=1}^{H-1}(H-h)\expected{s_h,a_h}{ \norm{T_1(s_h,a_h) -\widehat T_1(s_h,a_h)}}\\
			&& (\textrm{from Lemma \ref{lemma:rademacher_neural_nets}})\\
			&\leq&\Lip{}{\fancyA}{(R)}\sum_{h=1}^{H-1}(H-h)( \frac{\!2\sqrt{2}d}{|D|}\!\big(\expected{s^i}{  \!\sqrt{\!\sum_{i=1}^{|D|}\norm{ s^i}_{2}} }\!+\!1\big)
	+d\ln{\sqrt{\frac{1/\delta}{|D|}}}+\Delta\ )\\
	&=&\Lip{}{\fancyA}{(R)}\sum_{h=1}^{H-1}(H-h)( \frac{\!2\sqrt{2}d}{|D|}\!\expected{s^i}{  \!\sqrt{\!\sum_{i=1}^{|D|}\norm{ s^i}_{2}} }
	+d\ln{\sqrt{\frac{1/\delta}{|D|}}}+\Delta\ )\\
	&&+\ \Lip{}{\fancyA}{(R)}\sum_{h=1}^{H-1}(H-h)\frac{\!2\sqrt{2}d}{|D|}\\
	&=&\frac{H(H-1)}{2}C_1+\frac{H(H-1)}{2}C_2
\end{eqnarray*}
Moving on to the multi-step case, from Theorem \ref{theorem:value_error_with_multi_step_model} we get:
\begin{eqnarray*}
\Big|\expected{s_1}{V^{\pi}_{H}(s_1)-\widehat V^{\pi}_{H}(s_1)}\Big|&\leq&  \Lip{}{\fancyA}{(R)}\sum_{h=1}^{H-1}\expected{s_1,\aVec_{h}}{ \norm{T_h(s_1,\aVec_h) -\That_h(s_1,\aVec_{h})}}\\
&& (\textrm{from Lemma \ref{lemma:rademacher_neural_nets}})\\
&\leq&\Lip{}{\fancyA}{(R)}\sum_{h=1}^{H-1}( \frac{\!2\sqrt{2}d}{|D|}\!\big(\expected{s^i}{  \!\sqrt{\!\sum_{i=1}^{|D|}\norm{ s^i}_{2}} }\!+\!h\big)
	+d\ln{\sqrt{\frac{1/\delta}{|D|}}}+\Delta\ )\\
&=&\Lip{}{\fancyA}{(R)}\sum_{h=1}^{H-1}( \frac{\!2\sqrt{2}d}{|D|}\!\expected{s^i}{  \!\sqrt{\!\sum_{i=1}^{|D|}\norm{ s^i}_{2}} }
	+d\ln{\sqrt{\frac{1/\delta}{|D|}}}+\Delta\ )\\
	&&+\ \Lip{}{\fancyA}{(R)}\sum_{h=1}^{H-1}h\frac{\!2\sqrt{2}d}{|D|}\\
	&=&(H-1)C_1+\frac{H(H-1)}{2}C_2
\end{eqnarray*}
\end{proof}

\begin{figure}[H]
	\centering
	\includegraphics[width=0.6\linewidth]{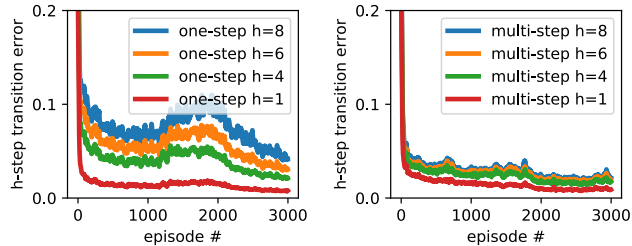}
	\caption{$h$-step model error (down is good) on the Lunar Lander domain under the one-step (left) and the multi-step (right) model. For each new episode, before using the episode for training, we compute $h$-step predictions given all observed states and executed action sequences, and compare the predictions with the states observed $h$ steps later in the episode. The multi-step model outperforms the one-step model.}
\label{fig:lunar_accuracy}
\end{figure}

\subsection{Future Work: An Stochastic Extension}

One limitation of the model so far is that it is deterministic. Here, we introduce a stochastic extension to remove this limitation. Our model is based on the one-step EM model introduced by \namecite{asadi_lipschitz}. In this case, for each value of $h\in [1,H]$, rather than learning a single function $\That_h$, we train $M$ functions each denoted $\That_{hm}$, to capture different \textit{modes} of the $h$-step transition dynamic. The multi-step transition model, $\That$, is then defined as
$$\That:=\big\{\That_{hm}|h\in [1,H],m \in [1,M]\big\}\ ,$$
plus $h$ probability distributions each over $M$ functions $\Pr_h(\That_{h:})$. Each function $\That_{hm}$ is parameterized by a neural network $\That_{hm}(s,\aVec_h;W_{hm})$. For a single $h$, we train these $M$ functions using an Expectation-Maximization (EM) algorithm \cite{dempster_EM}. For more details on the EM algorithm see \namecite{asadi_lipschitz}, but, for completeness, we provide the M-step and the E-step of the algorithm. The M-step at iteration $t$ solves:
$$\underset{W_{h,1:m}}{\textup{maximize}} \sum_{i=1}^{n}\!\sum_{m=1}^M\! q_{t-1}(m|s^i,\aVec_h^i,s'^{i})\!\log \Pr(s^i,\aVec_h^i,s'^{i},m;W_{hm})$$
and also sets:
$$\Pr(\That_{hm})\leftarrow\sum_{i}\Pr(m|s^i,\aVec^i_h,s'^i)\ .$$
The E-step updates the posterior $q$:
$$q_{t}(m|s^i,\aVec^i_h,s'^{i})\leftarrow \frac{\Pr(s^i,\aVec^i_h,s'^{i}|m;W_{hm})}{\sum_{m}\Pr(s^i,\aVec^i_h,s'^{i}|m;W_{hm})}, $$
giving rise to a new optimization problem to solve in the M-step of iteration $t+1$. Finally, our implementation uses:
$$\Pr(s,\aVec_h,s'|m;W_{hm}):=\mathcal{N}\Big(\big|s'-\That_{hm}(s,\aVec_h;W_{hm})\big|,\sigma^2\Big).$$
We apply this algorithm in a variant of the gridworld (mini-Pacman) domain \cite{moerland_multi_modal}. Details of the domain are in supplementary material, but at a high level, the goal is for the agent to start in the bottom-left and get to the top-right corner of the grid, while avoiding a randomly moving ghost. 
\begin{figure}
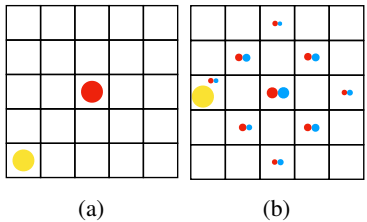

\centering
\begin{subfigure}[b]{0.17\textwidth}
   \includegraphics[width=1\linewidth]{figures/em_example_input.pdf}
   \caption{}
\end{subfigure}
\begin{subfigure}[b]{0.17\textwidth}
   \includegraphics[width=1\linewidth]{figures/em_example_output.pdf}
   \caption{}
\end{subfigure}
\caption[]{A starting state (a), and predicted (blue) and true (red) next state (b) given $\aVec_2=\{\textup{up,up}\}$. Radius of a circle denotes the probability of the circle ending up in the state. The EM model accurately captures the two-step dynamics.}
\label{fig:em_predictions}
\end{figure}
We first show, in Figure \ref{fig:em_predictions}, that the agent can learn a 2-step model of the environment almost perfectly. We then use the model for action selection in Q-learning \cite{rummery_q_learning}, where instead of a greedy policy with respect to the Q-function, we build and search a tree using our learned transition model. In each case, the model was learned using a dataset $\langle s^{i},\aVec_2^{i},s'^{i} \rangle$ from 100 episodes of a random policy interacting in the domain. Having searched the tree up to $H=2$, we then select the action with highest utility. In Figure \ref{fig:em_returns}, we see a clear advantage for the EM model relative to the baselines.
\label{section:stochastic}
\begin{figure}
	\centering
	\includegraphics[width=0.4\linewidth]{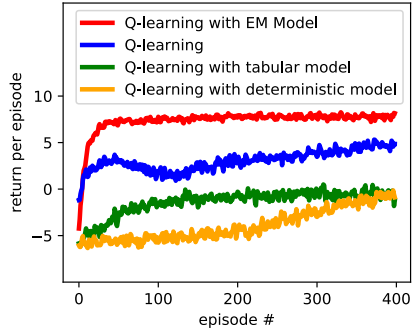}
	\caption{An evaluation of effectiveness of different transition models for tree search on the gridworld. The deterministic model fails to provide useful predictions as the average prediction is for the ghost to not move at all. }
	\label{fig:em_returns}
\end{figure}

\subsection{Future Work: An Ensemble Extension}
A promising idea for future work is a new ensemble technique articulated below. Suppose our goal is to predict the state after following $\aVec_{H}$ in $s_1$. Two obvious paths are:
$$\hat{s}_{H+1}\leftarrow \That_H(s_1,\aVec_H)\ ,\ \quad  \hat{s}_{H+1}\leftarrow (\That_1)^{H}(s_1,\aVec_H)\ .$$
However, there exist more paths, such as using $\That_2$ twice, followed by $\That_{H-4}$:
$$\hat{s}_{H+1}\leftarrow \That_{H-4}\big((\That_2)^2(s_1,\aVec_4),\aVec_{5:H}\big)\ .$$
In fact, the number of paths grows exponentially with $H$: Let $C(H)$ be the number of paths. We have the following distinct paths: First use $\That_h$, then do the remaining $H-h$ steps in any arbitrary path. There are $C(H-h)$ such paths, So:
$$C(H)=1+\sum_{h=1}^{H-1}C(H-h)=1+\sum_{h=1}^{H-1}C(h)=2^{H-1} \ .$$
We can thus use a subset of these $2^{H-1}$ paths and average the results together as is common with other ensemble methods~\cite{caruana_ensemble}. Below, we report a preliminary experiment evaluating this idea.

We ran the actor-critic algorithm in the Acrobot domain. We trained the multi-step model with maximum horizon $H=8$. For each episode, we computed the 8-step transition error as a function of the number of paths used to compute the prediction. Note that we sampled a path uniformly at random from $2^{8-1}$ possible paths. Results are presented in Figure \ref{ensemble_experiment}, and show that predictions get more accurate as we average over more paths. 

Note that prior works on ensemble methods in model-based RL exist \cite{moore_prioritized,ensemble_kurutach}, but only consider combinations of one-step models. Here, we only scratched the surface of this idea, and we leave further exploration for future work.
\begin{figure}
	\centering
	\includegraphics[width=0.4\linewidth]{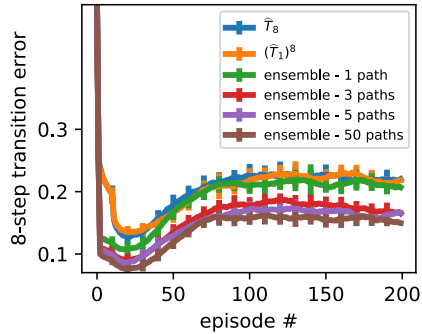}
	\caption{8-step model accuracy. Accuracy of ensembles increases with more sampled paths, and outperforms the accuracy of $\That_8$ as well as $(\That_1)^8$. Note that, due to the nature of this task, as the learner gets better at solving the task, it experiences more diverse states, hence the decrease in accuracy in later episodes.}
\label{ensemble_experiment}
\end{figure}

\end{document}